\DeclareMathOperator*{\argmax}{arg\,max}
\newcommand{\oracle}{\mathcal{O}}
\newcommand{\guess}{\hat{\oracle}} 
\newcommand{\exalg}{\mathcal{A}}
\newcommand{\ddomain}{\mathcal{X}}
\newcommand{\drange}{\mathcal{Y}}
\newcommand{\distr}{\mathcal{D}}
\newcommand{\relu}{\text{ReLU}}
\newtheorem*{rep@theorem}{\rep@title}
\newcommand{\newreptheorem}[2]{%
\newenvironment{rep#1}[1]{%
 \def\rep@title{#2 \ref{##1}}%
 \begin{rep@theorem}}%
 {\end{rep@theorem}}}
\newtheorem{theorem}{Theorem}
\newtheorem{lemma}{Lemma}
\newtheorem{definition}{Definition}[section]
\begin{document}

\title{High Accuracy and High Fidelity Extraction of Neural Networks}

\author[$\dagger, *$]{Matthew Jagielski}
\author[*]{Nicholas Carlini}
\author[*]{David Berthelot}
\author[*]{Alex Kurakin}
\author[*]{Nicolas Papernot}

\affil[$\dagger$] {Northeastern University}
\affil[*] {Google Research}

\maketitle
\thispagestyle{plain}
\pagestyle{plain}

\begin{abstract}
In a model extraction attack, an adversary steals a copy of a remotely deployed machine learning model,
given oracle prediction access.
We taxonomize model 
extraction attacks around two objectives: 
\emph{accuracy}, i.e., performing well on the underlying learning task, and 
\emph{fidelity}, i.e., matching the predictions of the remote victim classifier on any input.

To extract a high-accuracy model,
we develop a learning-based attack exploiting the victim to 
supervise the training of an extracted model.
Through analytical and empirical arguments, we then explain the inherent limitations that prevent any learning-based strategy from extracting a truly high-fidelity model---i.e., extracting a functionally-equivalent model whose predictions are identical to those of the victim model on all possible inputs.
Addressing these limitations, 
we expand on prior work to develop the first practical functionally-equivalent
extraction attack for direct extraction (i.e., without training) of a model's
weights.

We perform experiments both on academic datasets
and a state-of-the-art image classifier trained with 1 billion proprietary images.
In addition to broadening the scope of model extraction research, our work demonstrates the practicality of model extraction attacks against production-grade systems.

\end{abstract}

\section{Introduction}

Machine learning, and neural networks in particular, are  widely
deployed in industry
settings. Models are often deployed as prediction services or otherwise exposed
to potential adversaries.
Despite this fact, the trained models themselves are often
proprietary and are closely guarded.

There are two reasons models are often seen as sensitive.
First, they are expensive to obtain.
Not only is it expensive to train the final model \cite{strubell2019energy}
(e.g., Google recently trained a model with 340 million
parameters on hardware costing 61,000 USD \emph{per training run}~\cite{yang2019xlnet}),
performing the work to identify the optimal set of model architecture,
training algorithm, and hyper-parameters often eclipses the cost of training
the final model.
Further, training these models 
also requires investing in expensive collection process to
obtain the training datasets necessary to 
obtain an accurate classifier \cite{halevy2009unreasonable,deng2009imagenet,sutskever2014sequence,van2016wavenet}.
Second, there are security \cite{papernot2017practical, lowd2005adversarial} and privacy \cite{shokri2017membership, salem2018ml} concerns for revealing trained
models to potential adversaries.

Concerningly, prior work found that an adversary with query access to a model can steal the
model to obtain a copy that largely agrees with the remote victim models  \cite{lowd2005adversarial, tramer2016stealing, orekondy2019knockoff, DBLP:journals/corr/abs-1811-02054, oh2017towards, DBLP:journals/corr/abs-1905-09165, correia2018copycat}. 
These extraction attacks are therefore important to consider.

In this paper, we systematize the space of model extraction around two adversarial objectives: \textit{accuracy} and \textit{fidelity}. Accuracy measures the correctness of predictions made by the extracted model on the test distribution. Fidelity, in contrast, measures the general agreement between the extracted and victim models on any input. Both of these objectives are desirable, but they are in conflict for imperfect victim models: a high-fidelity extraction should replicate the errors of the victim, whereas a high-accuracy model should instead try to make an accurate prediction.
At the high-fidelity limit is \textit{functionally-equivalent} model extraction: the two models agree on all inputs, both on and off the underlying data distribution.

While most prior work considers accuracy~\cite{tramer2016stealing,papernot2017practical,DBLP:journals/corr/abs-1811-02054}, 
we argue that fidelity is often equally important.
When using model extraction to mount black-box adversarial
example attacks \cite{papernot2017practical}, fidelity ensures the attack is more effective because more adversarial examples transfer from the extracted model to the victim. Membership inference~\cite{shokri2017membership,salem2018ml} benefits from the extracted model closely replicating the confidence of predictions made by the victim. Finally, a functionally-equivalent extraction enables the adversary to inspect whether internal representations reveal unintended attributes of the input---that are statistically uncorrelated with the training objective, enabling the adversary to benefit from overlearning~\cite{song2019overlearning}.

We design one attack for each objective.
First, a \textit{learning-based attack}, which uses the victim to generate labels for training the extracted model. While existing techniques already achieve high accuracy, our attacks are $16\times$ more query-efficient and scale to larger models.
We perform experiments that surface inherent limitations of learning-based extraction attacks and argue that learning-based strategies are ill-suited to achieve high-fidelity extraction.
Then, we develop the first practical \textit{functionally-equivalent attack}, which directly recovers
a two-layer neural network's weights exactly given access to double-precision model inference.
Compared to prior work, which required a high-precision
power side-channel~\cite{hong2018security} or access to model gradients~\cite{milli2018model}, 
our attack only requires input-output
access to the model, while simultaneously scaling to larger networks than either of the prior methods.

We make the following contributions:
\begin{itemize}
    \item We taxonomize the space of model extraction attacks by exploring the objective of \textit{accuracy} and \textit{fidelity}.
    \item We improve the query efficiency of learning attacks for accuracy extraction
    and make them practical for millions-of-parameter models trained on
    billions of images.
    \item We achieve high-fidelity extraction by developing the first
    practical functionally-equivalent
    model extraction.
    \item We mix the proposed methods to obtain a hybrid
    method which improves both accuracy and fidelity extraction.
\end{itemize}


\section{Preliminaries}
\label{sec:prelim}
We consider classifiers with domain $\ddomain\subseteq\mathbb{R}^d$ and range $\drange\subseteq\mathbb{R}^K$; the output of the classifier is a distribution over $K$ class labels. The class assigned to an input $x$ by a classifier $f$ is $\argmax_{i\in[K]}f(x)_i$ (for $n\in\mathbb{Z}$, we write $[n]=\lbrace 1, 2, \ldots n\rbrace$). In order to satisfy the constraint that a classifier's output is a distribution, a \emph{softmax} $\sigma(\cdot)$ is typically applied to the output of an arbitrary function $f_L: \ddomain\rightarrow\mathbb{R}^K$:
\[
\sigma(f_L(x))_i = \frac{\exp(f_L(x)_i)}{\sum_j\exp(f_L(x)_j)}.
\]
We call the function $f_L(\cdot)$ the \emph{logit function} for a classifier $f$. To convert a class label into a probability vector, it is common to use \emph{one-hot encoding}: for a value $j\in[K]$, the one-hot encoding $OH(j; K)$ is a vector in $\mathbb{R}^K$ with $OH(j; K)_i=\mathbbm{1}(i = j)$---that is, it is 1 only at index $j$, and 0 elsewhere.

Model extraction concerns reproducing a victim model, or oracle, which we write $\oracle: \ddomain\rightarrow\drange$. The model extraction adversary will run an extraction algorithm $\exalg(\oracle)$, which outputs the extracted model $\guess$. We will sometimes parameterize the oracle (resp. extracted model) as $\oracle_\theta$ (resp. $\guess_\theta$) to denote that it has model parameters $\theta$---we will omit this when unnecessary or apparent from context.

In this work, we consider $\oracle$ and $\guess$ to both be neural networks. A neural network is a sequence of operations---alternatingly applying linear operations and non-linear operations---a pair of linear and non-linear operations is called a \emph{layer}. Each linear operation projects onto some space $\mathbb{R}^h$---the dimensionality $h$ of this space is referred to as the \emph{width} of the layer. The number of layers is the \emph{depth} of the network. The non-linear operations are typically fixed, while the linear operations have parameters which are learned during training. The function computed by layer $i$, $f_i(a)$, is therefore computed as $f_i(a)=g_i(A^{(i)}a + B^{(i)})$, where $g_i$ is the $i$th non-linear function, and $A^{(i)}, B^{(i)}$ are the parameters of layer $i$ ($A^{(i)}$ is the weights, $B^{(i)}$ the biases). A common choice of activation is the rectified linear unit, or ReLU, which sets $\relu(x)=\max(0, x)$.  Introduced to improve the convergence of optimization when training neural networks, the ReLU activation has established itself as an effective default choice for practitioners~\cite{nair2010rectified}. Thus, we consider primarily ReLU networks in this work. 

The network structure described here is called \emph{fully connected} because each linear operation ``connects" every input node to every output node. In many domains, such as computer vision, this is more structure than necessary. A neuron computing edge detection, for example, only needs to use information from a small region of the image. \emph{Convolutional networks} were developed to combat this inefficiency---the linear functions become filters, which are still linear, but are only applied to a small (e.g., 3x3 or 5x5) window of the input. They are applied  to every window using the same weights, making convolutions require far fewer parameters than fully connected networks.

Neural networks are trained by \emph{empirical risk minimization}. Given a dataset of $n$ samples $\mathcal{D}=\lbrace x_i, y_i\rbrace_{i=1}^n\subseteq \ddomain\times\drange$, training involves minimizing a loss function $L$ on the dataset with respect to the parameters of the network $f$. A common loss function is the cross-entropy loss $H$ for a sample $(x,y)$: $H(y, f(x)) = -\sum_{k\in[K]} y_k\log(f(x)_k)$, where $y$ is the probability (or one-hot) vector for the true class. The cross-entropy loss on the full dataset is then
\[
L(\mathcal{D}; f) = \frac{1}{n}\sum_{i=1}^n H(y_i, f(x_i)) = -\frac{1}{n}\sum_{i=1}^n \sum_{k\in[K]} y_k\log(f(x)_k).
\]

The loss is minimized with some form of gradient descent, often stochastic gradient descent (SGD). In SGD, gradients of parameters $\theta$ are computed over a randomly sampled batch $B$, averaged, and scaled by a learning rate $\eta$:
\[
\theta_{t+1} = \theta_t - \frac{\eta}{|B|}\sum_{i\in B}\nabla_{\theta} H(y_i, f(x_i)).
\]
Other optimizers ~\cite{nesterov1983method, duchi2011adaptive, kingma2014adam} use gradient statistics to reduce the variance of updates which can result in better performance.

A less common setting, but one which is important for our work, is when the target values $y$ which are used to train the network are not one-hot values, but are probability vectors output by a different model $g(x)$. When training using the dataset $\mathcal{D}_g = \lbrace x_i, g(x_i)^{1/T}\rbrace_{i=1}^n$, we say the trained model is \emph{distilled} from $g$ with \emph{temperature} $T$, referring to the process of distillation introduced in Hinton \emph{et al.}~\cite{hinton2015distilling}. Note that the values of $g(x_i)^{1/T}$ are always scaled to sum to 1.

\section{Taxonomy of Threat Models}
\label{sec:taxonomy}

We now address the spectrum of adversaries interested in extracting neural networks. As illustrated in Table~\ref{tab:priorwork}, we taxonomize the space of possible adversaries around two overarching goals---\textit{theft} and \textit{reconnaissance}. We detail why extraction is not always practically realizable by constructing models that are impossible to extract, or require a large number of queries to extract. We conclude our threat model with a discussion of how adversarial capabilities (e.g., prior knowledge of model architecture or information returned by queries) affect the strategies an adversary may consider.

\begin{table*}[]
\footnotesize{}
    \centering
    \begin{tabular}{lllll}
        \toprule
        Attack & Type & Model type & Goal & Query Output \\
        ~ & ~ & ~ & ~ & ~ \\
        \midrule
        Lowd \& Meek~\cite{lowd2005adversarial} & Direct Recovery & LM & Functionally Equivalent & Labels \\
        Tramer \emph{et al.}~\cite{tramer2016stealing} & (Active) Learning & LM, NN & Task Accuracy, Fidelity & Probabilities, labels \\
        Tramer \emph{et al.}~\cite{tramer2016stealing} & Path finding & DT & Functionally Equivalent & Probabilities, labels \\
        Milli \emph{et al.}~\cite{milli2018model} (theoretical) & Direct Recovery & NN (2 layer) & Functionally Equivalent & Gradients, logits \\
        Milli \emph{et al.}~\cite{milli2018model} & Learning & LM, NN & Task Accuracy & Gradients \\
        Pal \emph{et al.}~\cite{DBLP:journals/corr/abs-1905-09165} & Active learning & NN & Fidelity & Probabilities, labels \\
        Chandrasekharan \emph{et al.}~\cite{DBLP:journals/corr/abs-1811-02054} & Active learning & LM & Functionally Equivalent & Labels \\
        Copycat CNN~\cite{correia2018copycat} & Learning & CNN & Task Accuracy, Fidelity & Labels \\
        Papernot \emph{et al.}~\cite{papernot2017practical} & Active learning & NN & Fidelity & Labels \\
        CSI NN~\cite{batina2018csi} & Direct Recovery & NN & Functionally Equivalent & Power Side Channel \\
        Knockoff Nets~\cite{orekondy2019knockoff} & Learning & NN & Task Accuracy & Probabilities \\
        \midrule
        Functionally equivalent (this work) & Direct Recovery & NN (2 layer) & Functionally Equivalent & Probabilities, logits \\
        Efficient learning (this work) & Learning & NN & Task Accuracy, Fidelity & Probabilities \\
        \bottomrule
    \end{tabular}
    \vspace{.5em}
    \caption{Existing Model Extraction Attacks. Model types are abbreviated: LM = Linear Model, NN = Neural Network, DT = Decision Tree, CNN = Convolutional Neural Network.}
    \label{tab:priorwork}

\end{table*}

\subsection{Adversarial Motivations}

Model extraction attacks target the \textit{confidentiality} of a victim model
deployed on a remote service. A model refers here to both the architecture and its parameters. Architectural details include the learning hypothesis (i.e., neural network in our case) and corresponding details (e.g., number of layers and activation functions for neural networks). Parameter values are the result of training.

First, we consider \emph{theft} adversaries, motivated by economic incentives.
Generally, the defender went through an expensive process to design the model's architecture and train it to set parameter values.  Here, the model can be viewed as intellectual property that the adversary is trying to steal. A line of work has in fact referred to this as ``model stealing''~\cite{tramer2016stealing}.

In the latter class of attacks, the adversary is performing \textit{reconnaissance} to later mount attacks targeting other security properties of the learning system: e.g., its integrity with adversarial examples~\cite{papernot2017practical}, or privacy with training data membership inference~\cite{shokri2017membership,salem2018ml}. Model extraction enables an adversary previously operating in a black-box threat model to mount attacks against the extracted model in a white-box threat model. The adversary has---by design---access to the extracted model's parameters.  In the limit, this adversary would expect to extract an \textit{exact} copy of the oracle.

\begin{figure}
    \centering
    \includegraphics[width=.8\linewidth]{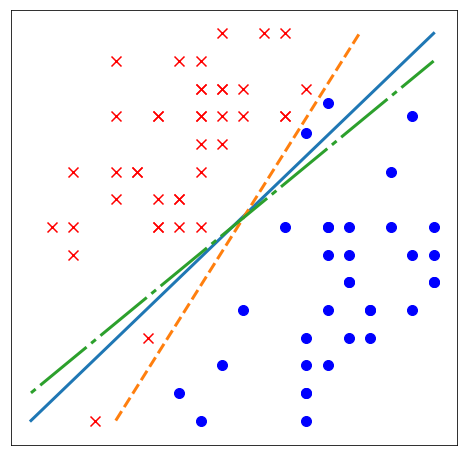}
    \caption{Illustrating fidelity vs. accuracy. The solid blue line is the oracle; functionally equivalent extraction recovers this exactly. The green dash-dot line achieves high fidelity: it matches the oracle on all data points. The orange dashed line achieves perfect accuracy: it classifies all points correctly. 
    }
    \label{fig:fid_vs_acc}
\end{figure}

The goal of \textbf{exact extraction} is to produce $\guess_\theta = \oracle_\theta$, so that the model's architecture and all of its weights are identical to the oracle. This definition is purely a strawman---it is the strongest possible attack, but it is fundamentally impossible for many classes of neural networks, including ReLU networks, because any individual model belongs to a large equivalence class of networks which are indistinguishable from input-output behavior. For example, we can scale an arbitrary neuron's input weights and biases by some $c>0$, and scale its output weights and biases by $c^{-1}$; the resulting model's behavior is unchanged. Alternatively, in any intermediate layer of a ReLU network, we may also add a \emph{dead neuron} which never contributes to the output, or might permute the (arbitrary) order of neurons internally. Given access to input-output behavior, the best we can do is identify the equivalence class the oracle belongs to.

\subsection{Adversarial Goals}

This perspective yields a natural spectrum of realistic adversarial goals characterizing decreasingly precise extractions.

\vspace{-4mm}

\paragraph{\textbf{Functionally Equivalent Extraction}}
The goal of functionally equivalent extraction is to construct an $\guess$ such that $\forall x\in\ddomain$, $\guess(x)=\oracle(x)$. This is a tractable weakening of the exact extraction definition from earlier---it is the hardest possible goal using only input-output pairs. The adversary obtains a member of the oracle's equivalence class. This goal enables a number of downstream attacks, including those involving inspection of the model's internal representations like overlearning~\cite{song2019overlearning}, to operate in the white-box threat model.

\vspace{-4mm}

\paragraph{\textbf{Fidelity Extraction}}
Given some target distribution $\distr_F$ over $\ddomain$, and goal similarity function $S(p_1, p_2)$, the goal of fidelity extraction is to construct an $\guess$ that maximizes $\Pr_{x\sim\distr_F}\left[S(\guess(x), \oracle(x))\right]$. In this work, we consider only \emph{label agreement}, where $S(p_1, p_2)=\mathbbm{1}(\argmax(p_1)=\argmax(p_2))$; we leave exploration of other similarity functions to future work. 

A natural distribution of interest $\distr_F$ is the data distribution itself---the adversary wants to make sure the mistakes and correct labels are the same between the two models. A reconnaissance attack for constructing adversarial examples would care about a perturbed data distribution; mistakes might be more important to the adversary in this setting. Membership inference would use the natural data distribution, including any outliers. These distributions tend to be concentrated on a low-dimension manifold of $\ddomain$, making fidelity extraction significantly easier than functionally equivalent extraction. Indeed, functionally equivalent extraction achieves a perfect fidelity of 1 on \emph{all distributions} and \emph{all similarity functions}.

\vspace{-4mm}

\paragraph{\textbf{Task Accuracy Extraction}}
For the true task distribution $\distr_A$ over $\ddomain\times\drange$, the goal of task accuracy extraction is to construct an $\guess$ maximizing $\Pr_{(x, y) \sim\distr_A}\left[\argmax(\guess(x))=y\right]$. This goal is to match (or exceed) the accuracy of the target model, which is the easiest goal to consider in this taxonomy (because it doesn't need to match the mistakes of $\oracle$).

\vspace{-4mm}

\paragraph{\textbf{Existing Attacks}}
In Table~\ref{tab:priorwork}, we fit previous model extraction work into this taxonomy, as well as discuss their techniques.
Functionally equivalent extraction has been considered for linear models~\cite{lowd2005adversarial, DBLP:journals/corr/abs-1811-02054}, decision trees~\cite{tramer2016stealing}, both given probabilities, and neural networks~\cite{milli2018model, batina2018csi}, given extra access.
%
%
%
Task accuracy extraction has been considered for linear models~\cite{tramer2016stealing} and neural networks~\cite{milli2018model, correia2018copycat, orekondy2019knockoff}, and fidelity extraction has also been considered for linear models~\cite{tramer2016stealing} and neural networks~\cite{DBLP:journals/corr/abs-1905-09165, papernot2017practical}. 
Notably, functionally equivalent attacks require model-specific techniques, while task accuracy and fidelity typically use generic learning-based approaches.

\subsection{Model Extraction is Hard}
\label{ssec:extracthard}
Before we consider adversarial capabilities in Section~\ref{ssec:adv-capabilities} and potential corresponding approaches to model extraction, we must understand how successful we can hope to be. Here, we present arguments that will serve to bound our expectations. First, we will identify some limitations of functionally equivalent extraction by constructing networks which require arbitrarily many queries to extract. Second, we will present another class of networks that cannot be extracted with fidelity without querying a number of times exponential in its depth. We provide intuition in this section and later prove these statements in Appendix~\ref{app:hardness_formal}.

\textbf{Exponential hardness of functionally equivalent attacks.}
\label{sssc:func_hard}
In order to show that functionally equivalent extraction is intractable in the worst case, we construct of a class of neural networks that are hard to extract without making exponentially many queries in the network's width.

\begin{theorem}
\label{thm:rectangle}
There exists a class of width $3k$ and depth 2 neural networks on domain $[0, 1]^d$ (with precision $p$ numbers) with $d\ge k$ that require, given logit access to the networks, $\Theta(p^k)$ queries to extract.
\end{theorem}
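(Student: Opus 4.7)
The plan is to construct an explicit family of $p^k$ depth-2 width-$3k$ ReLU networks $\{f_s : s \in \{0,\ldots,p-1\}^k\}$ whose logit functions have pairwise-disjoint ``active supports,'' and then argue by a standard adversary/adaptive-answering argument that any extraction algorithm must make $\Omega(p^k)$ queries. The matching upper bound will be trivial: query every grid point.

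First I would fix a base-$p$ projection $v\in\mathbb{R}^k$ with $v_j=p^{j-1}$, so that $s\mapsto \mu_s:=\langle v,s\rangle$ is an injection from $\{0,\ldots,p-1\}^k$ onto the integers $\{0,1,\ldots,p^k-1\}$ with pairwise gaps at least $1$. For each secret $s$, I would define $f_s$ as the one-dimensional ``tent'' along direction $v$ centered at $\mu_s$:
\[
f_s(x)=\relu(\langle v,x\rangle-\mu_s+\delta)-2\,\relu(\langle v,x\rangle-\mu_s)+\relu(\langle v,x\rangle-\mu_s-\delta),
\]
with $\delta<1/2$ chosen compatibly with the precision-$p$ constraint. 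This is a depth-2 ReLU network using 3 hidden units; I would pad with $3k-3$ dead neurons (or replicate the construction in orthogonal input coordinates) to reach width exactly $3k$, and extend $v$ with zero entries for $d>k$. By construction the support of $f_s$ is the strip $\{x:|\langle v,x\rangle-\mu_s|<\delta\}$, and since $|\mu_s-\mu_{s'}|\geq 1>2\delta$ for $s\neq s'$, these supports are pairwise disjoint; in particular for any input $x$ at most one secret $s$ satisfies $f_s(x)\neq 0$.

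For the lower bound, consider any adaptive extraction algorithm. I would use the following oracle strategy: answer $0$ to every query. The set of secrets consistent with all $N$ answers is $\{s:\forall i,\ f_s(x^{(i)})=0\}$; by the disjointness observation each query eliminates at most one element, so this set has size at least $p^k-N$. Whenever $N<p^k-1$, at least two distinct secrets $s,s'$ remain consistent. Because $f_s$ and $f_{s'}$ compute genuinely different functions (they differ on their respective tent supports), no single output network can be functionally equivalent to both; the adversary retroactively commits to whichever consistent secret disagrees with the algorithm's output, so extraction fails. This yields $\Omega(p^k)$. The upper bound $O(p^k)$ is obtained by querying each grid point $s/p$: exactly one query, the one hitting the true $\mu_s$, returns the nonzero tent peak, pinpointing $s$ and hence the whole network.

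The main obstacle I anticipate is bookkeeping around the precision-$p$ and width-$3k$ requirements simultaneously, since the naive weights $p^{j-1}$ grow exponentially in $k$. I would address this by globally rescaling $v$ (e.g., by dividing through by $p^{k-1}$) and correspondingly shrinking the tent half-width $\delta$; all relevant quantities then sit at or above the $1/p^{O(k)}$ scale, which is compatible with $p$-precision arithmetic on integers encoded in the stated representation, and the separation argument above (disjoint supports, at-most-one-elimination per query) goes through unchanged so the $\Theta(p^k)$ bound is preserved. Padding to width exactly $3k$ and extending the domain from $d=k$ to $d\geq k$ by ignoring extra coordinates are routine.
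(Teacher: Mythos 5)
Your proposal is correct in substance and rests on the same two pillars as the paper's own proof: a family of $p^k$ networks whose nonzero supports are pairwise disjoint (so each precision-$p$ query point lies in the support of at most one candidate), followed by the counting adversary that answers $0$ everywhere and thereby eliminates at most one candidate per query; indeed your 3-ReLU tent $\relu(t-\mu+\delta)-2\relu(t-\mu)+\relu(t-\mu-\delta)$ is exactly the paper's gadget $T_i$ in different notation. Where you genuinely diverge is the geometry of the $p^k$ disjoint supports: the paper puts one tent on each of $k$ coordinates and combines them with a second hidden-layer ReLU acting as an AND gate, producing axis-aligned grid cells, so all $3k$ units and the depth-2 structure are used and all weights stay of magnitude $O(p)$; you instead collapse everything onto a single base-$p$ direction $v$ and use one tent, padding $3k-3$ dead units. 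Your route is arguably cleaner (it avoids the second-layer thresholding step, where the paper's normalization of $T_i/(b_i-a_i)$ is in fact off by a factor of $2$), but the rescaling you defer to the end is load-bearing rather than bookkeeping: with $v_j=p^{j-1}$ and $x\in[0,1]^k$ the attainable values of $\langle v,x\rangle$ top out at $\sum_j p^{j-1}=(p^k-1)/(p-1)$, so most of your intended centers $\mu_s\in\{0,\dots,p^k-1\}$ lie outside the image of the domain and those tents would be identically zero as written. The fix is exactly what you sketch---center the tents on the values of $\langle v,x\rangle$ actually attained by grid points and take $\delta$ below half their minimum separation (which is $1/p$ for the grid $\{0,1/p,\dots,(p-1)/p\}$)---after which disjointness, the one-elimination-per-query count, and hence the $\Theta(p^k)$ bound go through just as in the paper.
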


The precision $p$ is the number of possible values a feature can take from $[0, 1]$. In images with 8-bit pixels, we have $p=256$. The intuition for this theorem is that a width $3k$ network can implement a function that returns a non-zero value on at most a $p^{-k}$ fraction of the space. In the worst case, $p^{k}$ queries are necessary to find this fraction of the space.

Note that this result assumes the adversary can only observe the input-output behavior of the oracle. If this assumption is broken then functionally equivalent extraction becomes practical. For example, Batina \emph{et al.}~\cite{batina2018csi} perform functionally equivalent extraction by performing a side channel attack (specifically, differential power analysis~\cite{kocher1999differential}) on a microprocessor evaluating the neural network.

We also observe in Theorem~\ref{thm:subsetsum} that, given white-box access to two neural networks, it is NP-hard in general to test if they are functionally equivalent. We do this by constructing two networks that differ only in coordinates satisfying a subset sum instance. Then testing functional equivalence for these networks is as hard as finding the satisfying subset.

\begin{theorem}[Informal]
\label{thm:subsetsum}
Given their weights, it is NP-hard to test whether two neural networks are functionally equivalent.
\end{theorem}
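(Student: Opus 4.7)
The plan is to reduce Subset Sum to testing functional equivalence. Given an instance $(a_1,\ldots,a_n,T)$ of Subset Sum with positive integers, I will construct in polynomial time two ReLU networks $N_1, N_2:\mathbb{R}^n\to\mathbb{R}$ such that $N_1 \equiv N_2$ iff \emph{no} subset of the $a_i$'s sums to $T$. Since Subset Sum is NP-hard, this reduction shows deciding functional equivalence is NP-hard, matching the theorem's informal phrasing.

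I will take $N_1 \equiv 0$, trivially representable. For $N_2$, I will build a non-negative ``penalty'' function $E(y)$ that vanishes exactly when $y\in\{0,1\}^n$ and $\sum_i a_i y_i = T$, and set $N_2(y) = \relu(c - E(y))$ for a threshold $c>0$ chosen below. The penalty decomposes into three ReLU-computable terms: a cube-penalty $D(y) = \sum_i [\relu(-y_i) + \relu(y_i - 1)]$, which vanishes exactly on $[0,1]^n$; a binary-penalty $B(y) = \sum_i \relu\bigl(\tfrac12 - \tfrac12\relu(2y_i-1) - \tfrac12\relu(1-2y_i)\bigr)$, which inside $[0,1]^n$ vanishes exactly at the cube's vertices; and a sum-penalty $S(y) = \relu(\sum_i a_i y_i - T) + \relu(T - \sum_i a_i y_i)$. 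Taking $E = M_1 D + M_2 B + S$ for suitably large positive $M_1, M_2$ yields the desired function, implementable by a constant-depth ReLU network of size polynomial in $n$ and $\log(\max_i|a_i|)$.

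For correctness: if a subset sum solution $y^\star \in \{0,1\}^n$ exists, then $D(y^\star) = B(y^\star) = S(y^\star) = 0$, giving $N_2(y^\star) = c > 0 = N_1(y^\star)$, so the networks disagree. Conversely, if no solution exists, I must show $E(y) \ge c$ for \emph{every} $y$. On $\{0,1\}^n$, integrality of the $a_i$'s forces $S(y) \ge 1$. On non-binary $y \in [0,1]^n$, write $y = z + \epsilon$ with $z\in\{0,1\}^n$ nearest, and split into two regimes: if $\|\epsilon\|_1$ is small, then $S(y) \ge 1 - (\max_i |a_i|)\|\epsilon\|_1$ stays bounded below; if $\|\epsilon\|_1$ is large, then $B(y)$ itself is bounded below (since $B(y) \ge \sum_i |\epsilon_i|$ for small deviations). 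Balancing these regimes yields a uniform positive lower bound on $B(y)+S(y)$ of order $1/(1+\max_i|a_i|)$. Outside the cube, a sufficiently large $M_1$ makes $M_1 D(y)$ dominate. Choose $c$ to be any positive rational strictly below this lower bound; it requires only polynomially many bits.

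The main obstacle is this gap argument in the converse direction: guaranteeing $E$ is bounded uniformly away from zero over all of $\mathbb{R}^n$ whenever no Subset Sum solution exists, so that a single polynomially computable threshold $c$ cleanly separates the two cases. The integrality of the $a_i$'s supplies the slack on binary inputs, the two-regime trade-off between $B$ and $S$ handles non-binary interior points, and a large enough weight $M_1$ takes care of points outside the cube; all the constants $(M_1, M_2, c)$ depend only on $n$ and $\max_i|a_i|$, so the reduction is polynomial-time, completing the proof.
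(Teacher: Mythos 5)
Your reduction is correct in outline, but it takes a genuinely different---and in fact stronger---route than the paper. The paper's formal version of this theorem restricts the domain to the Boolean hypercube $\lbrace 0,1\rbrace^d$: it sets $f_1\equiv 0$ and lets $f_2(x)=\relu(x\cdot v-(T-p/2))+\relu(x\cdot v-(T+p/2))-2\relu(x\cdot v-T)$, a one-hidden-layer ``bump'' in the subset sum $x\cdot v$ that is nonzero on a Boolean input exactly when that input encodes a solution. Because inputs are already binary by fiat, no integrality or gap argument is needed and the whole proof is three lines. You instead prove hardness of equivalence over all of $\mathbb{R}^n$, which is closer to what ``functionally equivalent'' means for real-valued networks and is the harder statement: you must penalize non-binary and out-of-cube inputs and then exhibit a uniform, polynomially representable gap $c$. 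Your interior analysis is right ($B(y)$ equals the $\ell_1$ distance to the nearest vertex on $[0,1]^n$, and the trade-off against the Lipschitz constant $\max_i a_i$ of $S$ gives a bound of order $1/(1+\max_i a_i)$). The one place your sketch is too glib is the out-of-cube regime: $D(y)$ is \emph{not} bounded below outside the cube (it vanishes as $y$ approaches the boundary), so ``$M_1D$ dominates'' is not literally true; what saves you is that an excursion of $\ell_1$-size $D(y)$ outside the cube can decrease $S$ by at most $(\max_i a_i)\cdot D(y)$, so the same two-regime balancing used for $B$ applies with $M_1\approx\max_i a_i$ (e.g.\ $M_1=M_2=\max_i a_i$ and $c=1/2$ works uniformly). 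With that fix the proof is complete; the price of your generality is a depth-3 network and the gap bookkeeping, whereas the paper's domain restriction buys a two-layer construction with no analysis at all.
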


Any attack which can claim to perform functionally equivalent extraction efficiently (both in number of queries used and in running time) must make some assumptions to avoid these pathologies. In Section~\ref{sec:dna}, we will present and discuss the assumptions of a functionally equivalent extraction attack for two-layer neural network models.

\paragraph{Learning approaches struggle with fidelity.}
\label{sssc:rand_hard}
A final difficulty for model extraction comes from recent work in learnability~\cite{DBLP:journals/corr/abs-1904-03866}. Das \emph{et al.} prove that, for deep random networks with input dimension $d$ and depth $h$, model extraction approaches that can be written as Statistical Query (SQ) learning algorithms require $\exp(O(h))$ samples for fidelity extraction. SQ algorithms are a restricted form of learning algorithm which only access the data with noisy aggregate statistics; many learning algorithms, such as (stochastic) gradient descent and PCA, are examples. As a result, most learning-based approaches to model extraction will inherit this inefficiency. A sample-efficient approach therefore must either make assumptions about the model to be extracted (to distinguish it from a deep random network), or must access its dataset without statistical queries.

\begin{theorem}[Informal~\cite{DBLP:journals/corr/abs-1904-03866}]
Random networks with domain $\lbrace 0, 1\rbrace^d$ and range $\lbrace 0, 1\rbrace$ and depth $h$ require $\exp(O(h))$ samples to learn in the SQ learning model.
\end{theorem}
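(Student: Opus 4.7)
The plan is to invoke the standard statistical query dimension (SQ-DIM) framework of Blum, Furst, Jackson, Kearns, Mansour, and Rudich, which lower bounds the number of SQ queries (and hence, up to polynomial factors in the tolerance, the number of samples needed to simulate them) required to learn any concept class in terms of the size of a family of near-orthogonal concepts contained in it. Concretely, if a class contains $N$ concepts whose pairwise correlations with respect to the underlying input distribution are all bounded by $\gamma$, then any SQ-learner achieving nontrivial accuracy needs roughly $N \cdot \gamma$ queries, and in particular $\Omega(1/\gamma)$ queries (up to tolerance factors). So my job reduces to showing that the ensemble of depth-$h$ random networks over $\{0,1\}^d$ contains exponentially many concepts that are exponentially in $h$ near-orthogonal.

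First, I would fix the uniform distribution on $\{0,1\}^d$ and consider the ensemble obtained by drawing each layer's weights independently from, e.g., a suitable Gaussian or scaled Rademacher distribution, with the final layer thresholded to produce a $\{0,1\}$ output. I would prove a depth-amplification-of-decorrelation lemma: for two independently sampled random networks $f, g$ of depth $h$, the expected squared correlation $\mathbb{E}_{f,g}\left[\langle f, g\rangle_{U_d}^2\right]$ decays as $\exp(-\Omega(h))$. The natural route is layer-by-layer: model the joint distribution of the pre-activations of $f$ and $g$ at layer $i$, argue that each nonlinearity (ReLU or similar) strictly contracts the correlation between these representations by a constant factor bounded away from $1$, and iterate. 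For Gaussian pre-activations this contraction can be read off from a Hermite/Mehler-kernel expansion; for Boolean inputs one first appeals to a CLT-type approximation since the pre-activations are sums of many independent terms.

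Second, I would use a union-bound / probabilistic argument to exhibit an explicit subfamily of $N = \exp(\Omega(d))$ random networks whose pairwise correlations are simultaneously bounded by $\gamma = \exp(-\Omega(h))$. Third, I would feed $N$ and $\gamma$ into the SQ-DIM lower bound to conclude that any SQ-learner for this class requires $\exp(\Omega(h))$ queries, and then invoke the standard simulation of SQ queries by samples to convert this into the claimed $\exp(\Omega(h))$ sample lower bound.

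The main obstacle is the first step — quantitatively controlling how each random layer contracts the correlation between two independent networks, uniformly in $h$. The difficulty is that naive bounds either degrade with the width of the hidden layer (if one is not careful about normalization) or fail to give a per-layer contraction factor strictly less than $1$. Existing tools from the noise-sensitivity of random neural networks and the analysis of random feature maps suffice, but the technical core of the proof, as in Das \emph{et al.}~\cite{DBLP:journals/corr/abs-1904-03866}, lies in establishing this iterated contraction cleanly; the SQ-DIM reduction on top is then essentially black-box.
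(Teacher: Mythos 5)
This theorem is stated as an informal import from Das \emph{et al.}~\cite{DBLP:journals/corr/abs-1904-03866}; the paper itself offers no proof of it (the appendix only proves Theorems~\ref{thm:rectangle} and~\ref{thm:subsetsum}), so there is no in-paper argument to compare against. Your outline---lower-bounding the statistical query dimension by exhibiting exponentially many random deep networks whose pairwise correlations decay as $\exp(-\Omega(h))$ via a layer-by-layer contraction argument, then invoking the standard SQ-dimension query lower bound---is essentially the strategy of the cited reference, so your proposal is consistent with the source the paper relies on.
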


\subsection{Adversarial Capabilities}
\label{ssec:adv-capabilities}

We organize an adversary's prior knowledge about the oracle and its training data into three categories---\textit{domain knowledge}, \textit{deployment knowledge}, and \textit{model access}.

\subsubsection{\textbf{Domain Knowledge}}
Domain knowledge describes what the adversary knows about the task the model is designed for.
For example, if the model is an image classifier, then the model output should not change under standard image data augmentations, such as shifts, rotations, or crops. 
Usually, the adversary should be assumed to have as much domain knowledge as the oracle's designer.

In some domains, it is reasonable to assume the adversary has access to public task-relevant pretrained models or datasets. This is often the case for learning-based model extraction, which we develop in Section~\ref{ssec:imagenet}. We consider an adversary using part of a public dataset of 1.3 million images~\cite{deng2009imagenet} as unlabeled data to mount an attack against a model trained on a proprietary dataset of 1 billion labeled images~\cite{mahajan2018exploring}. 

\paragraph{Learning-based extraction is hard without natural data}
In learning-based extraction, we assume that the adversary is able to collect public \textit{unlabeled} data to mount their attack. This is a natural assumption for a theft-motivated adversary who wishes to steal the oracle for local use---the adversary has data they want to learn the labels of without querying the model! For other adversaries, progress in generative modeling is likely to offer ways to remove this assumption~\cite{micaelli2019zero}. We leave this to future work because our overarching aim in this paper is to characterize the model extraction attacker space around the notions of accuracy and fidelity. All progress achieved by our approaches is complementary to possible progress in synthetic data generation.

\subsubsection{\textbf{Deployment Knowledge}}
Deployment knowledge describes what the adversary knows about the oracle itself, including the model architecture, training procedure, and training dataset. The adversary may have access to public artifacts of the oracle---a distilled version of the oracle may be available (such as for OpenAI GPT~\cite{radford2019language}) or the oracle may be transfer learned from a public pretrained model (such as many image classifiers~\cite{sharif2014cnn} or language models like BERT~\cite{devlin2018bert}).

In addition, the adversary may not even know the features (the exact inputs to the model) or the labels (the classes the model may output). While the latter can generally be inferred by interacting with the model (e.g., making queries and observing the labels predicted by the model), inferring the former is usually more difficult. Our preliminary investigations suggest that these are not limiting assumptions, but we leave proper treatment of these constraints to future work.

\subsubsection{\textbf{Model Access}}
Model access describes the information the adversary obtains from the oracle, including bounds on how many queries the adversary may make as well as the oracle's response:
\begin{itemize}[itemsep=3pt,topsep=3pt,parsep=3pt,partopsep=3pt]
\item \textit{label}: only the label of the most-likely class is revealed.
\item \textit{label and score}: in addition to the most-likely label, the confidence score of the model in its prediction for this label is revealed.
\item \textit{top-$k$ scores}: the labels and confidence scores for the $k$ classes whose confidence are highest are revealed.
\item \textit{scores}: confidence scores for all labels are revealed.
\item \textit{logits}: raw logit values for all labels are revealed.
\end{itemize}
In general, the more access an adversary is given, the more effective they should be in accomplishing their goal. We instantiate practical attacks under several of these assumptions. Limiting model access has also been discussed as a defensive measure, as we elaborate in Section~\ref{sec:relwork}.

\section{Learning-based Model Extraction}
\label{sec:learning-based}

We present our first attack strategy where the victim model serves as a labeling oracle for the adversary. While many attack variants exist~\cite{tramer2016stealing,papernot2017practical}, they generally stage an iterative interaction between the adversary and the oracle, where the adversary collects labels for a set of points from the oracle and uses them as a training set for the extracted model. These algorithms are typically designed for accuracy extraction; in this section, we will demonstrate improved algorithms for accuracy extraction, using task-relevant unlabeled data.

\label{ssec:imagenet}

We realistically simulate large-scale model extraction by considering an oracle that was trained on 1 billion Instagram images~\cite{mahajan2018exploring} to obtain (at the time of the experiment) state-of-the-art performance on the standard image classification benchmark, ImageNet~\cite{deng2009imagenet}. The oracle, with 193 million parameters, obtained 84.2\% top-1 accuracy and 97.2\% top-5 accuracy on the 1000-class benchmark---we refer to the model as the "WSL model", abbreviating the paper title. We give the adversary access to the public ImageNet dataset. The adversary's goal is to use the WSL model as a labeling oracle to train an ImageNet classifier that performs better than if we trained the model directly on ImageNet. \textit{The attack is successful if access to the WSL model---trained on 1 billion proprietary images inaccessible to the adversary---enables the adversary to extract a model that outperforms a baseline model trained directly with ImageNet labels.} This is accuracy extraction for the ImageNet distribution, given unlabeled ImageNet training data.

We consider two variants of the attack: one where the adversary selects 10\% of the training set (i.e., about 130,000 points) and the other where the adversary keeps the entire training set (i.e., about 1.3 million points). To put this number in perspective, recall that each image has a dimension of 224x224 pixels and 3 color channels, giving us $224\cdot 224\cdot 3=150,528$ total input features. Each image belongs to one of 1,000 classes. Although ImageNet data is labeled, we always treat it as unlabeled to simulate a realistic adversary. 

\subsection{Fully-supervised model extraction}

The first attack is fully supervised, as proposed by prior work~\cite{tramer2016stealing}. It serves to compare our subsequent attacks to prior work, and to validate our hypothesis that labels from the oracle are more informative than dataset labels.

The adversary needs to obtain a label for each of the points it intends to train the extracted model with. Then it queries the oracle to label its training points with the oracle's predictions. The oracle reveals \textit{labels and scores} (in the threat model from Section~\ref{sec:taxonomy}) when queried. 

The adversary then trains its model to match these labels using the cross-entropy loss. We used a distillation temperature of $T=1.5$ in our experiments after a random search. Our experiments use two architectures known to perform well on image classification: ResNet-v2-50 and ResNet-v2-200. 

\bigskip
\noindent \textbf{Results.} We present results in Table~\ref{tab:imagenet_top5}. For instance, the adversary is able to improve the accuracy of their model by $1.0\%$ for ResNetv2-50 and $1.9\%$ for ResNet\_v2\_200 after having queried the oracle for 10\% of the ImageNet data. Recall that the task has 1,000 labels, making these improvements significant. The gains we are able to achieve as an adversary are in line with progress that has been made by the computer vision community on the ImageNet benchmark over recent years, where the research community improved the state-of-the-art top-1 accuracy by about one percent point per year.\footnote{https://paperswithcode.com/sota/image-classification-on-imagenet}

\begin{table*}[]
    \centering
\footnotesize{
    \begin{tabular}{cc|cccccc}

    \toprule
    Architecture & Data Fraction & ImageNet & WSL & WSL-5 & ImageNet + Rot & WSL + Rot & WSL-5 + Rot \\
    \midrule
    Resnet\_v2\_50 & 10\% & (81.86/82.95) & (82.71/84.18) & (82.97/84.52) & (82.27/84.14) & (82.76/84.73) & (82.84/84.59) \\
    
    Resnet\_v2\_200 & 10\% & (83.50/84.96) & (84.81/86.36) & (85.00/86.67) & (85.10/86.29) & (86.17/88.16) & (86.11/87.54) \\
    
    Resnet\_v2\_50 & 100\% & (92.45/93.93) & (93.00/94.64) & (93.12/94.87) & N/A & N/A & N/A\\
    
    Resnet\_v2\_200 & 100\% & (93.70/95.11) & (94.26/96.24) & (94.21/95.85) & N/A & N/A & N/A \\
    \bottomrule
    
    \end{tabular}
    }
    \caption{Extraction attack (top-5 accuracy/top-5 fidelity) of the WSL model~\cite{mahajan2018exploring}. Each row contains an architecture and fraction of public ImageNet data used by the adversary. ImageNet is a baseline using only ImageNet labels. WSL is an oracle returning WSL model probabilities. WSL-5 is an oracle returning only the top 5 probabilities. Columns with (+ Rot) use rotation loss on unlabeled data (rotation loss was not run when all data is labeled). An adversary able to query WSL always improves over ImageNet labels, even when given only top 5 probabilities. Rotation loss does not significantly improve the performance on ResNet\_v2\_50, but provides a (1.36/1.80) improvement for ResNet\_v2\_200, comparable to the performance boost given by WSL labels on 10\% data. In the high-data regime, where we observe a (0.56/1.13) improvement using WSL labels.}
    \label{tab:imagenet_top5}

\end{table*}

\begin{table}[]
    \centering
    \footnotesize{
    \begin{tabular}{c|c|ccc}
    \toprule
    Dataset & Algorithm & 250 Queries & 1000 Queries & 4000 Queries \\
    \midrule
    SVHN & FS & (79.25/79.48) & (89.47/89.87) & (94.25/94.71) \\
    SVHN & MM & (95.82/96.38) & (96.87/97.45) & (97.07/97.61) \\
    CIFAR10 & FS & (53.35/53.61) & (73.47/73.96) & (86.51/87.37) \\
    CIFAR10 & MM & (87.98/88.79) & (90.63/91.39) & (93.29/93.99) \\
    \bottomrule
    \end{tabular}
    }
    \caption{Performance (accuracy/fidelity) of fully supervised (FS) and MixMatch (MM) extraction on SVHN and CIFAR10. MixMatch with 4000 labels performs nearly as well as the oracle for both datasets, and MixMatch at 250 queries beats fully supervised training at 4000 queries for both datasets.}
    \label{tab:mixmatch_ssl}
\end{table}

\subsection{Unlabeled data improves query efficiency}

For adversaries interested in theft, a learning-based strategy should minimize the number of queries required to achieve a given level of accuracy. A natural approach towards this end is to take advantage of advances in label-efficient ML, including active learning~\cite{angluin1988queries} and semi-supervised learning~\cite{blum1998combining}. 

Active learning allows a learner to query the labels of arbitrary points---the goal is to query the best set of points to learn a model with. Semi-supervised learning considers a learner with some labeled data, but much more unlabeled data---the learner seeks to leverage the unlabeled data (for example, by training on guessed labels) to improve classification performance. Active and semi-supervised learning are complementary techniques~\cite{song2020combining, simeoni2020rethinking}; it is possible to pick the best subset of data to train on, while also using the rest of the unlabeled data without labels.

The connection between label-efficient learning and learning-based model extraction attacks is not new \cite{tramer2016stealing, DBLP:journals/corr/abs-1811-02054, DBLP:journals/corr/abs-1905-09165}, but has focused on active learning. \textit{We show that, assuming access to unlabeled task-specific data, semi-supervised learning can be used to improve model extraction attacks.} This could potentially be improved further by leveraging active learning, as in prior work, but our improvements are overall complementary to approaches considered in prior work. We explore two semi-supervised learning techniques: rotation loss~\cite{zhai2019s} and MixMatch~\cite{berthelot2019mixmatch}.

\bigskip
\noindent \textbf{Rotation loss.}
We leverage the current state-of-the-art semi-supervised learning approach on ImageNet, which augments the model with a \textit{rotation loss}~\cite{zhai2019s}. The model contains two linear classifiers from the second-to-last layer of the model: the classifier for the image classification task, and a rotation predictor. The goal of the rotation classifier is to predict the rotation applied to an input---each input is fed in four times per batch, rotated by $\lbrace0^{\circ}, 90^{\circ}, 180^{\circ}, 270^{\circ}\rbrace$. The classifier should output one-hot encodings $\lbrace OH(0; 4), OH(1; 4), OH(2; 4), OH(3; 4)\rbrace$, respectively, for these rotated images. Then, the rotation loss is written:
\[
L_R(X; f_{\theta}) = \frac{1}{4N}\sum_{i=0}^N \sum_{j=1}^{r} H(f_{\theta}(R_j(x_i)), j)
\]
where $R_j$ is the $j$th rotation, $H$ is cross-entropy loss, and $f_{\theta}$ is the model's probability outputs for the rotation task. Inputs need not be labeled, hence we compute this loss on unlabeled data for which the adversary did not query the model. That is, we train the model on both unlabeled data (with rotation loss), and labeled data (with standard classification loss), and both contribute towards learning a good representation for all of the data, including the unlabeled data.

We compare the accuracy of models trained with the rotation loss on data labeled by the oracle and data with ImageNet labels. Our best performing extracted model, with an accuracy of $64.5\%$, is trained with the rotation loss on oracle labels whereas the baseline on ImageNet labels only achieves $62.5\%$ accuracy with the rotation loss and $61.2\%$ without the rotation loss. This demonstrates the cumulative benefit of adding a rotation loss to the objective and training on oracle labels for a theft-motivated adversary. 

We expect that as semi-supervised learning techniques on ImageNet mature, further gains should be reflected in the performance of model extraction attacks. 

\bigskip
\noindent\textbf{MixMatch.} To validate this hypothesis, we turn to smaller datasets where semi-supervised learning has made significant progress. We investigate a technique called MixMatch~\cite{berthelot2019mixmatch} on two datasets: SVHN~\cite{netzer2011reading} and CIFAR10~\cite{krizhevsky2009learning}. MixMatch uses a combination of techniques, including training on "guessed" labels, regularization, and image augmentations.

For both datasets, inputs are color images of 32x32 pixels belonging to one of 10 classes. The training set of SVHN contains 73257 images and the test set contains 26032 images. The training set of CIFAR10 contains 50000 images and the test set contains 10000 images.
We train the oracle with a WideResNet-28-2 architecture on the labeled training set. The oracles achieve 97.36\% accuracy on SVHN and 95.75\% accuracy on CIFAR10.

The adversary is given access to the same training set but without knowledge of the labels. 
Our goal is to validate the effectiveness of semi-supervised learning by demonstrating that the adversary only needs to query the oracle on a small subset of these training points to extract a model whose accuracy on the task is comparable to the oracle's. To this end, we run 5 trials of fully supervised extraction (no use of unlabeled data), and 5 trials of MixMatch, reporting for each trial the median accuracy of the 20 latest checkpoints, as done in~\cite{berthelot2019mixmatch}.

\bigskip
\noindent\textbf{Results.}
In Table~\ref{tab:mixmatch_ssl}, we find that with only 250 queries (293x smaller label set than the SVHN oracle and 200x smaller for CIFAR10), MixMatch reaches 95.82\% test accuracy on SVHN and 87.98\% accuracy on CIFAR10. This is higher than fully supervised training that uses 4000 queries. With 4000 queries, MixMatch is within 0.29\% of the accuracy of the oracle on SVHN, and 2.46\% on CIFAR10. The variance of MixMatch is slightly higher than that of fully supervised training, but is much smaller than the performance gap. These gains come from the prior MixMatch is able to build using the unlabeled data, making it effective at exploiting few labels. We observe similar gains in test set fidelity.



\section{Limitations of Learning-Based Extraction}
\label{sec:fid_limits}

Learning-based approaches have several sources of non-determinism: the random initializations of the model parameters, the order in which data is assembled to form batches for SGD, and even non-determinism in GPU instructions~\cite{sculley2015hidden, lakshminarayanan2017simple}. Non-determinism impacts the model parameter values obtained from training.
Therefore, even an adversary with full access to the oracle's training data, hyperparameters, etc., would still need all of the learner's non-determinism to achieve the \textit{functionally equivalent} extraction goal described in Section~\ref{sec:taxonomy}. 
In this section, we will attempt to quantify this: for a strong adversary, with access to the exact details of the training setup, we will present an experiment to determine the limits of learning-based algorithms to achieving fidelity extraction.

We perform the following experiment.
We query an oracle to obtain a labeled substitute dataset $\mathcal{D}$.
We use $\mathcal{D}$ for a learning-based extraction attack which produces a model $f_\theta^1(x)$.
We run the learning-based attack a second time using $\mathcal{D}$, but with different sources of non-determinism to obtain a new set of parameters $f_\theta^2(x)$. 
If there are points $x$ such that 
$f_{\theta}^1(x)\neq f_{\theta}^2(x)$, then the prediction on $x$ is dependent not on the oracle, 
but on the non-determinism of the learning-based attack 
strategy---we are unable to guarantee fidelity.

We independently control the initialization
randomness and batch randomness during training on Fashion-MNIST~\cite{xiao2017/online} with
fully supervised SGD (we use Fashion-MNIST for training speed). 
We repeated each run 10 times and measure agreement between the ten obtained
models on the test set, adversarial examples generated by running FGSM with $\epsilon=25/255$ with the oracle model and the test set, and uniformly random inputs. The oracle uses initialization seed 0 and SGD seed 0---we also use two different initialization and SGD seeds.

Even when both training and initialization randomness are fixed (so that only GPU non-determinism remains), fidelity peaks at 93.7\% on the test set (see Table~\ref{tab:nondeterminism}).
With no randomness fixed, extraction achieves 93.4\% fidelity on the test set.
(Agreement on the test set should should be considered in reference to the base test accuracy of 90\%.)
Hence, even an adversary who has the victim model's \emph{exact} training set will be unable to exceed  \textasciitilde 93.4\% fidelity.
Using prototypicality metrics, as investigated in Carlini et al.~\cite{carlini2019prototypical}, we notice that test points where fidelity is easiest to achieve are also the most prototypical (i.e., more representative of the class it is labeled as). This connection is explored further in Appendix~\ref{app:prototypical}.
The experiment of this section is also related to uncertainty estimation using deep ensembles~\cite{lakshminarayanan2017simple}; we believe a deeper connection may exist between the fidelity of learning-based approaches and uncertainty estimation. Also relevant is the work mentioned earlier in Section~\ref{sec:taxonomy}, that shows that random networks are hard for learning-based approaches to extract. Here, we find that learning-based approaches have limits even for trained networks, on some portion of the input space.



\begin{table}[]
    \centering
    \footnotesize{
    \begin{tabular}{c|cccc}
    \toprule
    Query Set & Init \& SGD & Same SGD & Same Init & Different \\
    \midrule
    Test & 93.7\% & 93.2\% & 93.1\% & 93.4\% \\
    Adv Ex & 73.6\% & 65.4\% & 65.3\% & 67.1\% \\
    Uniform & 65.7\% & 60.2\% & 59.0\% & 60.2\% \\
    \bottomrule
    \end{tabular}
    }
    \caption{Impact of non-determinism on extraction fidelity. Even models extracted using the same SGD and initialization randomness as the oracle do not reach 100\% fidelity.}
    \label{tab:nondeterminism}
\end{table}

It follows from these arguments that non-determinism of both the victim and extracted model's learning procedures potentially compound, limiting the effectiveness of using a learning-based approach to reaching high fidelity. 

\section{Functionally Equivalent Extraction}
\label{sec:dna}

Having identified fundamental limitations that prevent learning-based approaches 
from perfectly matching the oracle's mistakes, we now turn to a different approach where
the adversary extracts the oracle's weights directly, seeking to achieve functionally-equivalent extraction.

This attack can be seen as an extension of two prior works.
\begin{itemize}
    \item Milli \emph{et al.}~\cite{milli2018model} introduce an attack to extract
    neural network weights under the assumption that
    the adversary is able to make \emph{gradient queries}.
    That is, each query the adversary makes reveals not only the prediction of the
    neural network, but also the gradient of the neural network with respect to the query.
    To the best of our knowledge this is the only functionally-equivalent extraction
    attack on neural networks with one hidden layer, although it was not actually
    implemented in practice.
    \item Batina \emph{et al.}~\cite{batina2018csi}, at USENIX Security 2019,
    develop a side-channel attack that
    extracts neural network weights through monitoring the power use of a
    microprocessor evaluating the neural network. This is a much more powerful
    threat model than made by any of the other model extraction papers.
    To the best of our knowledge this is the only practical direct model
    extraction result---they manage to extract essentially arbitrary depth networks.
\end{itemize}

In this section we introduce an attack which only requires standard
queries  (i.e., that return the model's prediction instead of its gradients) and does not require any side-channel leakages, yet still manages to achieve \emph{higher fidelity} extraction than the side-channel extraction work for two-layer networks, assuming double-precision inference.

\bigskip
\noindent
\paragraph{Attack Algorithm Intuition.}
As in \cite{milli2018model}, our
attack is tailored to work on neural networks with the ReLU activation
function (the ReLU is an effective default choice of activation function~\cite{nair2010rectified}).
This makes the neural network a piecewise linear function.
Two samples are within the same linear region if all ReLU units have the
same sign, illustrated in Figure~\ref{fig:func_equiv}.

By finding adjacent linear regions, and computing the difference between them, we force a single ReLU to change signs. Doing this, it is possible to almost completely
determine the weight vector going into that ReLU unit.
Repeating this attack for all ReLU units lets us recover
the first weight matrix completely.
(We say almost here, because we must do some work to recover the sign of
the weight vector.)
Once the first layer of the two-layer neural network has been determined,
the second layer can be uniquely solved for algebraically through least squares.
This attack is optimal up to a constant factor---the query complexity is discussed in Appendix~\ref{app:dna_queries}.

\subsection{Notation and Assumptions}
As in \cite{milli2018model}, we only aim to extract neural networks
with one hidden layer using the ReLU activation function.
We denote the model weights by $A^{(0)}\in\mathbb{R}^{d\times h}, A^{(1)}\in\mathbb{R}^{h\times K}$ and biases by $B^{(0)}\in\mathbb{R}^{h}, B^{(1)}\in\mathbb{R}^{K}$. Here, $d, h$, and $K$ respectively refer to the input dimensionality, the size of the hidden layer, and the number of classes. This is found in Table~\ref{tab:diffextrnotation}.

\begin{table}
\label{tab:diffextrnotation}
\centering
\begin{tabular}{cc}
\toprule
    Symbol & Definition \\
    \midrule
    $d$ & Input dimensionality \\
    
    $h$ & Hidden layer dimensionality ($h < d$) \\
    
    $K$ & Number of classes \\
    
    $A^{(0)}\in\mathbb{R}^{d\times h}$ & Input layer weights \\
    
    $B^{(0)}\in\mathbb{R}^h$ & Input layer bias \\
    
    $A^{(1)}\in\mathbb{R}^{h\times K}$ & Logit layer weights \\
    
    $B^{(1)}\in\mathbb{R}^K$ & Logit layer bias \\
    \bottomrule
\end{tabular}
\vspace{.5em}
\caption{Parameters for the functionally-equivalent attack.}
\end{table}

We say that $\relu(x)$ is at a critical point if $x=0$; this is the
location at which the unit's gradient changes from $0$ to $1$.
We assume the adversary is able to observe the raw logit outputs as
64-bit floating point values. We will use the notation $\oracle_L$ to denote the logit oracle.
Our attack implicitly assumes that the rows of $A^{(0)}$ are linearly
independent. Because the dimension of the input space is larger
than the hidden space by at least 100, it is exceedingly unlikely
for the rows to be linearly dependent (and we find this holds true
in practice).

Note that our attack is not an SQ algorithm, which would only allow us to look at aggregate statistics of our dataset. Instead, our algorithm is very particular in its analysis of the network, computing the differences between linear regions, for example, cannot be done with aggregate statistics.
This structure allows us to avoid the pathologies of Section~\ref{sssc:func_hard}.

\begin{figure}
    \centering
    \includegraphics[width=\linewidth]{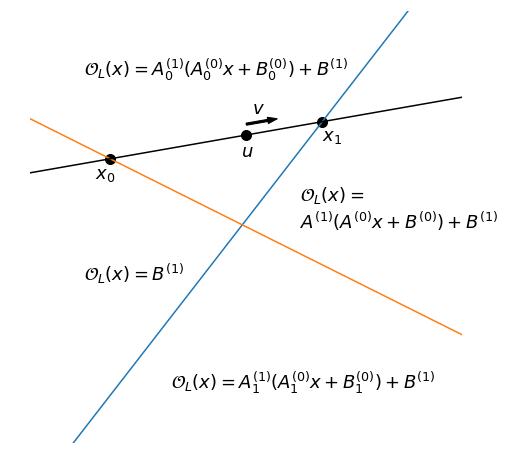}
    \caption{2-dimension intuition for the functionally equivalent extraction attack.}
    \label{fig:func_equiv}
\end{figure}

\subsection{Attack Overview}
The algorithm is broken into four phases:
\begin{itemize}
    \item \textbf{Critical point search} identifies inputs $\{x_i\}_{i=1}^n$ to the neural
    network so that exactly one of the ReLU units is at a critical point (i.e.,
    has input identically $0$).
    \item \textbf{Weight recovery} takes an input $x$ which causes
    the $i$th neuron to be at a critical point.
    We use this point $x$ to compute the difference between the
    two adjacent linear regions induced by the critical point,
    and thus the weight vector row $A^{(0)}_i$.
    By repeating this process for each ReLU we obtain the complete
    matrix $A^{(0)}$.
    Due to technical reasons discussed below, we can only recover the row-vector
    up to sign.
    \item \textbf{Sign recovery} determines the sign of each
    row-vector $A^{(0)}_j$ for all $j$ using global information about $A^{(0)}$.
    \item \textbf{Final layer extraction} uses algebraic techniques (least
    squares) to solve for the second layer of the network.
    
\end{itemize}

\subsection{Critical Point Search}
\label{ssec:critpointsearch}
For a two layer network, observe that 
the logit function is given by the equation
$\oracle_L(x) = A^{(1)}\relu(A^{(0)}x+B^{(0)}) + B^{(1)}$. 
To find a critical point for every ReLU, 
we sample two random vectors $u,v\in\mathbb{R}^d$, and consider the function
\[
L(t; u, v, \oracle_L) = \oracle_L(u+tv).
\]
for $t$ varying between a small and large appropriately selected value (discussed below).
This amounts to drawing a line in the inputs of the network; passed through ReLUs, this line becomes the piecewise linear function $L(\cdot)$. The points $t$ where $L(t)$ is non-differentiable are exactly locations where some $\relu_i$ is changing signs
(i.e., some ReLU is at a critical point). 
Figure~\ref{fig:crit_search} shows an example of what this sweep looks like on a trained MNIST model.

\begin{figure}
    \centering
    \includegraphics[width=\linewidth]{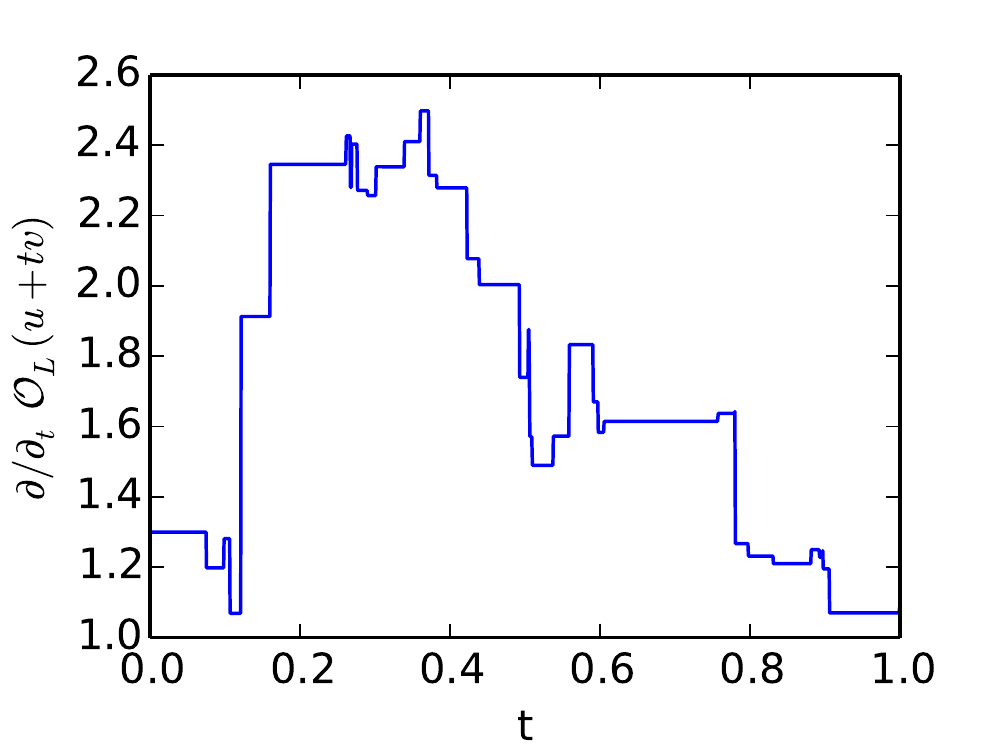}
    \caption{An example sweep for critical point search. Here we plot the partial derivative across $t$ and see that $\oracle_L(u+tv)$ is piecewise linear, enabling a binary search.}
    \label{fig:crit_search}
\end{figure}

Furthermore, notice that given a pair $u, v$, there is exactly one value $t$ for which each ReLU is at a critical point, 
and if $t$ is allowed to grow
arbitrarily large or small that every ReLU unit will switch sign exactly once.
Intuitively, the reason this is true is that each ReLU's input, (say $wx + b$ for some $w, b$), is a monotone function of $t$ ($w^Tu t+w^Tv + b$).
Thus, by varying $t$, we can identify an input $x_i$ that sets the $i$th ReLU to 0
for every relu $i$ in the network.
This assumes we are not moving parallel to any of the rows (where $w^Tu=0$), and that we vary $t$ within a sufficiently large interval (so the $w^Tut$ term may overpower the constant term). The analysis of \cite{milli2018model} suggests that these concerns can be resolved with high probability by varying $t\in\left[ -h^2, h^2 \right]$.

While in theory it would be possible to sweep all values of $t$ to identify the critical
points, this would require a large number of queries.
Thus, to efficiently search for the locations of critical points,
we introduce a refined search algorithm which improves on the
binary search as used in \cite{milli2018model}.
Standard binary search requires $O(n)$ model queries to obtain $n$ bits
of precision.
Therefore, we propose a refined technique which does not have this restriction
and requires just $O(1)$ queries to obtain high (20+ bits) precision.
The key observation we make is that if we are searching between two values $[t_1, t_2]$ and there is exactly one discontinuity in this range, we can precisely identify the location of that discontinuity efficiently.

\begin{figure}
  \centering

  \scalebox{.9}{
  \begin{overpic}[unit=1mm]{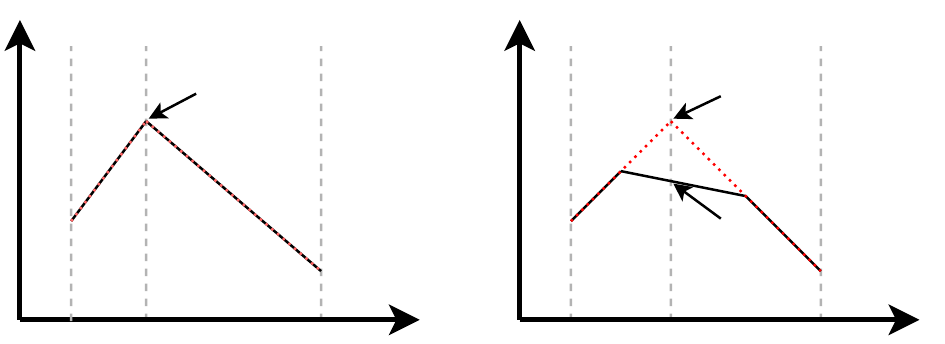}
    \put(6,-1){$t_1$}
    \put(14,-1){$x$}
    \put(32,-1){$t_2$}

    \put(56,-1){$t_1$}
    \put(67,-1){$x$}
    \put(82,-1){$t_2$}

    \put(22,25){$\oracle(x) = \text{exp. } \hat{\oracle}(x)$}

    \put(78,25){$\text{exp. } \hat{\oracle}(x)$}
    \put(72,9){$\oracle(x)$}
  \end{overpic}
  }

  \caption{Efficient and accurate 2-linear testing subroutine in Algorithm~\ref{alg:2-linear?}. Left shows a successful case where the algorithm
  succeeds; right shows a potential failure case, where there are
   multiple nonlinearities. We detect this by observing
  the expected value of $\oracle(x)$ is not the observed (queried)
  value.}
  \label{fig:2-linear?}
\end{figure}

\begin{algorithm}
\begin{algorithmic}
\State Function $f$, range $[t_1, t_2]$, $\epsilon$
\State $m_1 = \frac{f(t_1+\epsilon) - f(t_1)}{\epsilon}$ \Comment{Gradient at $t_1$}
\State $m_2 = \frac{f(t_2) - f(t_2-\epsilon)}{\epsilon}$ \Comment{Gradient at $t_2$}
\State $y_1 = f(a), y_2=f(b)$
\State $x = a + \frac{y_2-y_1-(b-a)m_2}{m_1-m_2}$ \Comment{Candidate critical point}
\State $\hat{y} = y_1+m_1\frac{y_2-y_1-(b-a)m_2}{m_1-m_2}$ \Comment{Expected value at candidate}
\State $y = f(x)$ \Comment{True value at candidate}
\If{$\hat{y} = y$} \Return $x$
\Else ~~\Return "More than one critical point"
\EndIf
\end{algorithmic}
\caption{Algorithm for 2-linearity testing. Computes the location of the only critical point in a given range or rejects if there is more than one.}
\label{alg:2-linear?}
\end{algorithm}

An intuitive diagram for this algorithm can be found in Figure~\ref{fig:2-linear?} and the algorithm can be found in Algorithm~\ref{alg:2-linear?}.
The property this leverages is that the function is piecewise linear--if we know the range is composed of two linear segments, we can identify the linear segments and compute their intersection.
In Algorithm~\ref{alg:2-linear?}, lines 1-3 describe computing the two linear regions' slopes and intercepts. Lines 4 and 5 compute the intersection of the two lines (also shown in the red dotted line of Figure~\ref{fig:2-linear?}). The remainder of the algorithm performs the correctness check, also illustrated in Figure~\ref{fig:2-linear?}; if there are more than 2 linear components, it is unlikely that the true function value will match the function value computed in line 5, and we can detect that the algorithm has failed.

\subsection{Weight Recovery}
\label{ssec:weightrecovery}
After running critical point search we obtain a set $\lbrace x_i\rbrace_{i=1}^h$, where each critical point corresponds to a point where a single ReLU flips sign.
In order to use this information to learn the weight matrix $A^{(0)}$ 
we measure the second derivative of $\oracle_L$ in each input direction at
the points $x_i$. Taking the second
derivative here corresponds to measuring the difference between the linear regions on either side of the ReLU.
Recall that prior work assumed direct access to gradient queries, and
thus did not require any of the analysis in this section.

\subsubsection{Absolute Value Recovery}
To formalize the intuition of comparing adjacent hyperplanes, observe that 
for the oracle $\oracle_L$ and for a critical point $x_i$ (corresponding to 
$\relu_i$ being zero) and for a random input-space direction $e_j$ we have
\[
\begin{split}
\left.\frac{\partial^2\oracle_L}{\partial e_j^2}\right|_{x_i} & = \left.\frac{\partial\oracle_L}{\partial e_j}\right|_{x_i+c\cdot e_j} - \left.\frac{\partial\oracle_L}{\partial e_j}\right|_{x_i - c\cdot e_j} \\
 & = \sum_k A^{(1)}_k\mathbbm{1}(A^{(0)}_k(x_i+c\cdot e_j)+B^{(0)}_k>0)A^{(0)}_{kj}\\
 & ~~ - \sum_k A^{(1)}_k\mathbbm{1}(A^{(0)}_k(x_i-c\cdot e_j)+B^{(0)}_k>0)A^{(0)}_{kj}\\
 & = A^{(1)}_i\left(\mathbbm{1}(A^{(0)}_i\cdot e_j > 0)-\mathbbm{1}(-A^{(0)}_i\cdot e_j > 0)\right)A^{(0)}_{ji}\\
 & = \pm(A^{(0)}_{ji} A^{(1)}_i)
\end{split}
\]

for a $c>0$ small enough so that $x_i \pm c\cdot e_j$ does not flip any other ReLU. Because $x_i$ is a critical point and $c$ is small, the sums in the second line differ only in the contribution of $\relu_i$.
However at this point we only have a product involving both weight matrices.
We now show this information is useful.

If we compute $|A^{(0)}_{1i} A^{(1)}|$ and $|A^{(0)}_{2i} A^{(1)}|$ by querying
along directions $e_1$ and $e_2$,
we can divide these quantities to obtain the value $|A^{(0)}_{1i}/A^{(0)}_{2i}|$,
the ratio of the two weights.
By repeating the above process for each input direction we can, for all $k$, obtain the pairwise
ratios $|A^{(0)}_{1i}/A^{(0)}_{ki}|$. 

Recall from Section~\ref{sec:taxonomy} that obtaining the ratios of weights is the
theoretically optimal result
we could hope to achieve. It is always possible to multiply all of the
weights \emph{into} a ReLU by a constant $c>0$ and then multiply
all of the weights \emph{out} of the $\relu$ by $c^{-1}$.
Thus, without loss of generality,
we can assign $A^{(0)}_{1i} = 1$ and scale the remaining entries accordingly.
Unfortunately, we have  lost a small amount of information here. 
We have only learned the absolute value of the ratio, and not the value itself.

\subsubsection{Weight Sign Recovery}
Once we reconstruct the values $|A^{(0)}_{ji}/A^{(0)}_{1i}|$ for all $j$
we need to recover the sign of these values.
To do this we consider the following quantity:
\[
\left.\frac{\partial^2\oracle_L}{\partial (e_{j}+e_{k})^2}\right|_{x_i} = \pm(A^{(0)}_{ji} A^{(1)}_i \pm A^{(0)}_{ki} A^{(1)}_i).
\]
That is, we consider what would happen if we take the second partial derivative
in the direction $(e_j + e_k)$.
Their contributions to the gradient will either cancel out, indicating $A^{0)}_{ji}$ and $A^{(0)}_{ki}$ are of opposite sign, or they will compound on each other, indicating they have the same sign. 
Thus, to recover signs, we can perform this comparison along each
direction $(e_1 + e_j)$.

Here we encounter one final difficulty. There are a total of $n$ signs we need
to recover, but because we compute the signs by comparing
ratios along different directions, we can only obtain $n-1$ relations.
That is, we now know the correct signed value of $A^{(0)}_i$ up to
a single sign for the entire row.

It turns out this is to be expected. What we have computed is the
normal direction to the hyperplane, but because any given
hyperplane can be described by an infinite number of normal vectors differing
by a constant scalar, we can not hope to use local information to recover this
final sign bit.

Put differently, while it is possible to push a constant $c>0$ through from the first layer
to the second layer, it is not possible to do this for negative constants,
because the ReLU function is not symmetric.
Therefore, it is necessary to learn the sign of this row.

\subsection{Global Sign Recovery}
Once we have recovered the input vector's weights, we still don't know the sign for the given inputs---we only measure the difference between linear functions at each critical point, but do not know which side is the positive side of the ReLU~\cite{milli2018model}. Now, we need to leverage global information in order to reconcile all of inputs' signs.

Notice that recovering $\hat{A}^{(0)}_i$ allows us to obtain $B^{(0)}_i$ by using the fact that $A^{(0)}_i\cdot x_i + B^{(0)}_i = 0$. Then we can compute $\hat{B}^{(0)}_i$ up to the same global sign as is applied to $\hat{A}^{(0)}_i$.

Now, to begin recovering sign, we search for a vector $z$ that is in the null space of $\hat{A}^{(0)}$, that is,
$\hat{A}^{(0)} z = \vec{0}$. Because the neural network has $h < d$, the null-space
is non-zero, and we can find many such vectors using least squares.
Then, for each $\relu_i$, we search for a vector $v_i$
such that $v_i A^{(0)} = e_i$ where here $e_i$ is the $i$th basis vector in the
hidden space.
That is, moving along the $v_i$ direction only changes $\relu_i$'s input value. 
Again we can search for this through least squares.

Given $z$ and these $v_i$ we query the neural network for the values of $\oracle_L(z)$, $\oracle_L(z+v_i)$, and $\oracle_L(z - v_i)$.
On each of these three queries, all hidden units are $0$ except for $\relu_i$ which
recieves as input either $0$, $1$, or $-1$
by the construction of $v_i$.
However, notice that the output of $\relu_i$ can only be either $0$ or $1$,
and the two $\{-1,0\}$ cases collapse to just output $0$.
Therefore, if $\oracle_L(z+v_i) = \oracle_L(z)$, we know that $A^{(0)}_i\cdot v_i < 0$. Otherwise, we will find $\oracle_L(z-v_i) = \oracle_L(z)$ and $A^{(0)}_i\cdot v_i > 0$. This allows us to recover the sign bit for $\relu_i$.

\subsection{Last Layer Extraction}
Given the completely extracted first layer, the logit function of the network is just a linear transformation which we can recover with least squares, through making $h$ queries where each ReLU is active at least once. In practice, we use the critical points discovered in the previous
section so that we do not need to make additional neural network queries.

\subsection{Results}

\noindent
\textbf{Setup.} We train several one-layer fully-connected neural networks with between
16 and 512 hidden units (for 12,000 and 100,000 trainable
parameters, respectively) on the MNIST~\cite{lecun1998gradient} and CIFAR-10 datasets~\cite{krizhevsky2009learning}.
We train the models with the Adam~\cite{kingma2014adam} optimizer for 20 epochs at batch
size 128 until they converge.
We train five networks of each size to obtain higher statistical significance. Accuracies of these networks can be found in the supplement in Appendix~\ref{app:dna_supp}. In Section~\ref{sec:learning-based}, we used 140,000$\approx 2^{17}$ queries for ImageNet model extraction. This is comparable to the number of queries used to extract the smallest MNIST model in this section, highlighting the advantages of both approaches.

\smallskip \noindent
\textbf{MNIST Extraction.}
We implement the functionally-equivalent extraction attack in JAX~\cite{JAX}
and run it on each trained oracle.
We measure the fidelity of the extracted model, comparing
predicted labels, on the MNIST test set.

Results are summarized in Table~\ref{tab:fe_results}.
For smaller networks, we achieve 100\% fidelity on the test set: every single one of
the $10,000$ test examples is predicted the same.
As the network size increases, low-probability errors we encounter become more common, but the extracted neural network still disagrees with the oracle on only $2$ of the $10,000$ examples.

Inspecting the weight matrix
that we extract and comparing it to the weight
matrix of the oracle classifier, we find that
we manage to reconstruct the first weight matrix to an average precision of 23 bits---we provide more results in Appendix~\ref{app:dna_supp}.

\smallskip \noindent
\textbf{CIFAR-10 Extraction.} 
Because this attack is data-independent, the underlying task
is unimportant for how well the attack works; only the number
of parameters matter.
The results for CIFAR-10 are thus identical to MNIST when
controlling for model size:
we achieve 100\% test set agreement on models with fewer than
$200,000$ parameters and  and greater than 99\% test set
agreement on larger models.

\smallskip \noindent
\textbf{Comparison to Prior Work.}
To the best of our knowledge, this is by orders of magnitude the highest fidelity
extraction of neural network weights. 

The only fully-implemented neural network
extraction attack we are aware of is the work of Batina \emph{et al.}~\cite{batina2018csi},
who uses an electromagnetic side channels and differential power analysis
to recover an MNIST neural network with neural network weights with an average error of 0.0025. In comparison, we are able to achieve an average error in the first
weight matrix for a similarly sized neural network of just 0.0000009---over
two thousand times more precise.
To the best of our knowledge no functionally-equivalent
CIFAR-10 models have
been extracted in the past.

We are unable to make a comparison between the fidelity of our extraction attack
and the fidelity of the attack presented in Batina \emph{et al.} because they
do not report on this number: they only report the accuracy of the extracted
model and show it is similar to the original model. We believe this strengthens
our observation that comparing across accuracy and fidelity is not currently widely
accepted as best practice.

\smallskip \noindent
\textbf{Investigating Errors.} We observe that as the number of parameters that
must be extracted increases, the fidelity of the model decreases.
We investigate why this happens and discovered that a small fraction
of the time (roughly 1 in 10,000) the gradient estimation procedure
obtains an incorrect estimate of the gradient and therefore one of
the extracted weights $\hat{A}^{(0)}_{ij}$ is incorrect by a non-insignificant
margin.

Introducing an error into just one of the weights of the
first matrix $\hat{A}^{(0)}$ should not induce significant further errors.
However, because of this error, when we solve for the bias
vector, the extracted
bias $\hat{B}^{(0)}_i$ will have error proportional to the error of 
$\hat{A}^{(0)}_{ij}$.
And when the bias is wrong, it impacts \emph{every} calculation, even those
where this edge is not in use.

Resolving this issue completely either requires reducing the failure
rate of gradient estimation from 1 in 10,000 to practically 0, or
would require a complex error-recovery procedure.
Instead, we will introduce in the following section an improvement
which almost completely solves this issue.

\smallskip \noindent
\textbf{Difficulties Extending the Attack.}
The attack is specific to two layer neural networks; deeper networks pose multiple difficulties.
In deep networks, the critical point search step of Section~\ref{ssec:critpointsearch} will result in critical points from many different layers, and determining which layer a critical point is on is nontrivial. 
Without knowing which layer a critical point is on, we cannot control inputs to the neuron, which we need to do to recover the weights in Section~\ref{ssec:weightrecovery}.
Even given knowledge of what layer a critical point is on, the inputs of any neuron past layer 1 are the outputs of other neurons, so we only have indirect control over their inputs.
Finally, even with the ability to recover these weights, small numerical errors occur in the first layer extraction.
These cause errors in every finite differences computation in further layers, causing the second layer to have even larger numerical errors than the first (and so on).
Therefore, extending the attack to deeper networks will require at least solving each of the following: producing critical points belonging to a specific layer, recovering weights for those neurons without direct control of their inputs, and significantly reducing numerical errors in these algorithms.

\begin{table}
\begin{tabular}{c|rrrr}
\toprule
\textbf{\# of Parameters} & 12,500 & 25,000 & 50,000 & 100,000 \\
\midrule
\textbf{Fidelity} & 100\% & 100\% & 100\% & 99.98\% \\
\midrule
\textbf{Queries} & $2^{17.2}$ & $2^{18.2}$ & $2^{19.2}$ & $2^{20.2}$ \\
\bottomrule
\end{tabular}
\caption{Fidelity of the functionally-equivalent extraction
attack across different test distributions on an MNIST victim model. Results are averaged over five extraction
attacks. For small models, we achieve perfect
fidelity extraction; larger models have near-perfect fidelity on the test data distribution, but begins to lose accuracy at $100,000$ parameters.}
\label{tab:fe_results}
\end{table}

\section{Hybrid Strategies}
\label{sec:hybrid}
Until now the strategies we have developed for extraction have been
pure and focused entirely on learning or entirely on direct extraction.
We now show that there is a continuous spectrum from which we can draw
attack strategies, and these \emph{hybrid} strategies can leverage both the
query efficiency of learning extraction, and the fidelity of direct extraction.

\subsection{Learning-Based Extraction with Gradient Matching}

Milli \emph{et al.} demonstrate that \emph{gradient matching}
helps extraction by optimizing the objective function
\[
\sum_{i=1}^n H(\oracle(x_i), f(x_i)) + \alpha|\nabla_x\oracle(x_i) - \nabla_xf(x_i)|_2^2,
\]
assuming the adversary can query the model for $\nabla_x\oracle(x)$. This is more model access than we permit our adversary, but is an example of using intuition from direct recovery to improve extraction. We found in preliminary experiments that this technique can improve fidelity on small datasets (increasing fidelity from 95\% to 96.5\% on Fashion-MNIST), but we leave scaling and removing the model access assumption of this technique to future work.
Next, we will show another combination of learning and direct recovery, using learning to alleviate some of the limitations of the previous functionally-equivalent extraction attack.

\subsection{Error Recovery through Learning}
Recall from earlier that the functionally-equivalent extraction attack fidelity
degrades as the model size increases. This is a result of low-probability errors
in the first weight matrix inducing incorrect biases on the first layer,
which in turn propagates and causes worse errors in the second layer.

We now introduce a method for performing a learning-based error recovery
routine. While performing a fully-learning-based attack leaves too many free
variables so that functionally-equivalent extraction is not possible, if we
fix many of the variables to the values extracted through the direct recovery
attack, we now show it is possible to learn the remainder of the variables.

Formally, let $\hat{A}^{(0)}$ be the extracted weight matrix for the first
layer and $\hat{B}^{(0)}$ be the extracted bias vector for the first layer.
Previously, we used least squares to directly solve for 
$\hat{A}^{(1)}$ and $\hat{B}^{(1)}$ assuming we had extracted the first layer
perfectly.
Here, we relax this assumption.
Instead, we perform gradient descent optimizing for parameters $W_{0..2}$ that
minimize
\[\mathbb{E}_{x \in \mathcal{D}} \big\lVert f_\theta(x) - W_1\relu(\hat{A}^{(0)} x + \hat{B}^{(0)} + W_0) + W_2   \big\rVert\]

That is, we use a single trainable parameter to adjust the bias term
of the first layer, and then solve (via gradient descent with training data) for the 
remaining weights accordingly.

This hybrid strategy increases the fidelity of the extracted model substantially,
detailed in Table~\ref{tab:fe_results2}. In the worst-performing example from earlier (with
only direct extraction) the extracted 128-neuron network had $80\%$ 
fidelity agreement with the victim model. When performing learning-based
recovery, the fidelity agreement jumps all the way to $99.75\%$.

\begin{table}
\begin{tabular}{c|rrrr}
\toprule
\textbf{\# of Parameters} & 50,000 & 100,000 & 200,000 & 400,000 \\
\midrule
\textbf{Fidelity} & 100\% & 100\% & 99.95\% & 99.31\% \\
\midrule
\textbf{Queries} & $2^{19.2}$ & $2^{20.2}$ & $2^{21.2}$ & $2^{22.2}$ \\
\bottomrule
\end{tabular}
\caption{Fidelity of extracted MNIST model is improved with the hybrid strategy. Note when comparing to Table~\ref{tab:fe_results} the model sizes are $4\times$ larger.}
\label{tab:fe_results2}
\end{table}

\subsubsection{Transferability}
\label{sec:transfer}
Adversarial examples \emph{transfer}: an adversarial example \cite{szegedy2013intriguing} generated on
one model often fools different models, too.
Transferability is higher when the models are more similar \cite{papernot2017practical}.

We should therefore expect that we can generate adversarial examples on our
extracted model, and that these will fool the remote oracle nearly always.
In order to measure transferability, we run 20 iterations of PGD \cite{madry2017towards}
with $\ell_{\infty}$ distortion set to the value most often used in the literature: for MNIST: $0.1$,
and for CIFAR-10: $0.03$.


The attack achieves functionally equivalent extraction (modulo floating point precision errors in the extracted weights), so we expect it to have high adversarial example transferability.
Indeed, we find we achieve a $100\%$ transferability success rate for all extracted models.

\begin{table}
\begin{tabular}{c|rrrr}
\toprule
\textbf{\# of Parameters} & 50,000 & 100,000 & 200,000 & 400,000 \\
\midrule
\textbf{Transferability} & 100\% & 100\% & 100\% & 100\% \\
\bottomrule
\end{tabular}
\caption{Transferability rate of adversarial examples using the extracted
neural network from our Section~\ref{sec:hybrid} attack.}
\label{tab:fe_results2}
\end{table}




\section{Related Work}
\label{sec:relwork}
Defenses for model extraction have fallen into two camps: limiting the information gained per query, and differentiating extraction adversaries from benign users. Approaches to limiting information include perturbing the probabilities returned by the model~\cite{tramer2016stealing, DBLP:journals/corr/abs-1811-02054,lee2018defending}, removing the probabilities for some of the model's classes~\cite{tramer2016stealing}, or returning only the class output~\cite{tramer2016stealing, DBLP:journals/corr/abs-1811-02054}. Another proposal has considered sampling from a distribution over model parameters~\cite{alabdulmohsin2014adding, DBLP:journals/corr/abs-1811-02054}. The other camp, differentiating benign from malicious users, has focused on analyzing query patterns~\cite{juuti2018prada,kesarwani2018model}. Non-adaptive attacks (such as supervised or MixMatch extraction) bypass query pattern-based detection, and are weakened by information limiting. We demonstrate the impact of removing complete access to probability values by considering only access to top 5 probabilities from WSL in Table~\ref{tab:imagenet_top5}. Our functionally-equivalent attack is broken by all of these measures. We leave consideration of defense-aware attacks to future work.

Queries to a model can also reveal hyperparameters~\cite{wang2018stealing} or architectural information~\cite{oh2017towards}. Adversaries can use side channel attacks to do the same~\cite{batina2018csi, hong2018security}. These are orthogonal to, but compatible with, our work---information about a model, such as assumptions made in Section~\ref{sec:dna}, empowers extraction.

Watermarking neural networks has been proposed~\cite{zhang2018protecting, uchida2017embedding} to identify extracted models.
Model extraction calls into question the utility of cryptographic protocols used to protect model weights.
One unrealized approach is obfuscation~\cite{barak2001possibility}, where an equivalent program could be released and queried as many times as desired.
A practical approach is secure multiparty computation, where each query is computed by running a protocol between the model owner and querier~\cite{barni2006privacy}.

\section{Conclusion}

This paper characterizes and explores the space of model extraction attacks
on neural networks.
We focus this paper specifically around the objectives of \emph{accuracy}, to measure the success of a theft-motivated adversary, and
\emph{fidelity}, an often-overlooked measure which compares the agreement between models to reflect the success of a recon-motivated adversary.

Our learning-based methods can effectively attack a model with several millions of
parameters trained on a billion images, and allows the attacker to reduce the
error rate of their model by 10\%.
This attack does not match perfect fidelity with the victim model due to
what we show are inherent limitations of learning-based approaches: nondeterminism (including only the nondeterminism on the GPU) prohibits training identical models.
In contrast, our direct functionally-equivalent extraction returns
a neural network agreeing with the victim model on $100\%$ of the test
samples and having $100\%$ fidelity on transfered adversarial examples.

We then propose a hybrid method which unifies these two attacks, using learning-based approaches to recover from numerical
instability errors when performing the functionally-equivalent extraction attack.

Our work highlights many remaining open problems in model extraction, such as reducing the capabilities required by our attacks and scaling functionally-equivalent extraction.

\section*{Acknowledgements}
We would like to thank Ilya Mironov for lengthy and fruitful discussions regarding the functionally equivalent extraction attack. We also thank \'{U}lfar Erlingsson for helpful discussions on positioning the work, and Florian Tram\`{e}r for his comments on an early draft of this paper.

\bibliographystyle{IEEEtran}
\bibliography{references}


\appendices

\section{Formal Statements for Section~\ref{ssec:extracthard}}
\label{app:hardness_formal}
Here, we give the formal arguments for the difficulty of model extraction to support informal statements from Section~\ref{ssec:extracthard}.

\begin{reptheorem}{thm:rectangle}
There exists a class of width $3k$ and depth 2 neural networks on domain $[0, 1]^d$ (with precision $p$ numbers) with $d\ge k$ that require, given logit access to the networks, $\Theta(p^k)$ queries to extract.
\end{reptheorem}

In order to prove Theorem~\ref{thm:rectangle}, we introduce a family of functions we call \emph{$k$-rectangle bounded functions}, which we will show satisfies this property.
\begin{definition}
A function $f$ on domain $[0, 1]^d$ with range $\mathbb{R}$ is a \emph{rectangle bounded function} if there exists two vectors $a, b$ such that $f(x)\neq 0 \implies a\preceq x \preceq b$, where $\preceq$ denotes element-wise comparison. The function $f$ is a \emph{$k$-rectangle bounded function} if there are $k$ indices $i$ such that $a_i\neq 0$ or $b_i\neq 1$.
\end{definition}
Intuitively, a $k$-rectangle function only outputs a non-zero value on a multidimensional rectangle that is constrained in only $k$ coordinates. We begin by showing that we can implement $k$-rectangle functions for any $a, b$ using a ReLU network of width $k$ and depth 2.

\begin{lemma}
For any $a, b$ with $k$ indices $i$ such that $a_i\neq 0$ or $b_i\neq 1$, we can construct a $k$-rectangle bounded function for $a, b$ with a ReLU network of width $3k$ and depth 2.
\end{lemma}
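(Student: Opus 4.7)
My plan is to build a depth-$2$ ReLU network implementing a ``soft AND'' of per-coordinate triangle bumps, using $3k$ units in the first hidden layer. Let $S$ denote the set of $k$ constrained indices (those with $a_i \neq 0$ or $b_i \neq 1$), and set $m_i = (a_i + b_i)/2$. In the first hidden layer I allocate three ReLU units per $i \in S$: $\relu(x_i - a_i)$, $\relu(x_i - m_i)$, and $\relu(x_i - b_i)$. These combine linearly into the triangle bump
\[
\phi_i(x_i) = \tfrac{2}{b_i - a_i}\bigl(\relu(x_i - a_i) - 2\,\relu(x_i - m_i) + \relu(x_i - b_i)\bigr),
\]
which vanishes for $x_i \notin [a_i, b_i]$ and attains $1$ at $x_i = m_i$.

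The network output then applies a second $\relu$:
\[
f(x) = \relu\!\Bigl(\textstyle\sum_{i \in S}\phi_i(x_i) - (k-1)\Bigr).
\]
To verify that $f$ is $k$-rectangle bounded for $(a,b)$, suppose $x \not\preceq b$ or $a \not\preceq x$: then some $x_j$ with $j \in S$ lies outside $[a_j, b_j]$, so $\phi_j(x_j) = 0$; since $\phi_i \le 1$ for every $i$, this forces $\sum_i \phi_i(x_i) \le k-1$ and hence $f(x) = 0$. At the box center $x^*$ with $x^*_i = m_i$ for $i \in S$ we have $\sum_i \phi_i(x_i^*) = k > k-1$, so $f(x^*) = 1$ and $f$ is nontrivial.

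The main subtlety is the outer $\relu$, which is essential: it implements the cross-coordinate conjunction that converts a sum of bumps — whose natural support is a cross-shaped union of slabs, not the box — into a function supported on the intersection of those slabs. A single hidden ReLU layer cannot be compactly supported in $d \ge 2$ dimensions (any nontrivial combination of finitely many $\relu$ activations has unbounded support along at least one direction), so a second nonlinearity is unavoidable. I interpret ``depth $2$'' in the lemma in the paired-operation sense of Section~\ref{sec:prelim}, which admits two nonlinearities; if the width constraint requires both hidden layers to be of width exactly $3k$, I simply pad the second hidden layer with $3k - 1$ zero-weight dummy units.
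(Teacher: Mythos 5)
Your construction is essentially identical to the paper's: the same three-ReLU triangle gadget per constrained coordinate, summed and thresholded at $k-1$ by an outer ReLU to realize the conjunction. The only difference is your normalization factor $2/(b_i-a_i)$, which correctly makes each bump peak at $1$ (the paper's $1/(b_i-a_i)$ gives a peak of $1/2$, so your version is in fact the cleaner one); the argument is otherwise the same.
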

\begin{proof}
We will start by constructing a 3-ReLU gadget with output $\ge 1$ only when $a_i\le x_i\le b_i$. We will then show how to compose $k$ of these gadgets, one for each index of the $k$-rectangle, to construct the $k$-rectangle bounded function.

The 3-ReLU gadget only depends on $x_i$, so weights for all other ReLUs will be set to 0. Observe that the function $T_i(x; a,b) = \relu(x-a) +\relu(x_i-b_i) - 2\relu(x_i-(a_i+b_i)/2)$ is nonzero only on the interval $(a_i,b_i)$. This is easier to see when it is written as
\begin{multline*}
\relu(x_i-a_i) - \relu(x_i-(a_i+b_i)/2) \\- (\relu(x_i-(a_i+b_i)/2) - \relu(x_i-b_i)).
\end{multline*}
The function $\relu(x-x_1) - \relu(x-x_2)$ with $x_1<x_2$ looks like a sigmoid, and has the following form:
\[
\relu(x-x_1) - \relu(x-x_2) = \begin{cases} 
      0 & x\leq x_1 \\
      x-x_1 & x_1\leq x\leq x_2 \\
      x_2-x_1 & x\geq x 
   \end{cases}
\]

Now, $T_i(x; a_i, b_i)\cdot 1/(b_i - a_i)$ has range $[0, 1]$ for any value of $a_i, b_i$. Then the function 
\[
f_{a,b}(x) = \relu( \sum_i(T_i(x; a_i, b_i) / (b_i - a_i)) - (k-1) )
\]
is $k$-rectangle bounded for vectors $a,b$. To see why, we need that no input $x$ not satisfying $a\preceq x \preceq b$ has $\sum_i(T_i(x; a_i, b_i) / (b_i - a_i)) > k-1$. This is simply because each term $T_i(x; a_i, b_i)\le 1$, so unless all $k$ such terms are $>0$, the inequality cannot hold.
\end{proof}

Now that we know how to construct a $k$-rectangle bounded function, we will introduce a set of $p^k$ disjoint $k$-rectangle bounded functions, and then show that any one requires $p^k$ queries to extract when the others are also possible functions.

\begin{lemma}
There exists a family of $k$-rectangle bounded functions $\mathcal{F}$ such that extracting an element of $\mathcal{F}$ requires $p^k$ queries in the worst case.
\end{lemma}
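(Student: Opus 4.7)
The plan is to exhibit $p^k$ pairwise disjoint $k$-rectangle bounded functions and then invoke a pigeonhole / adversary argument. I would tile the first $k$ coordinates of $[0,1]^d$ by an axis-aligned grid of width $1/p$, producing $p^k$ cells indexed by $\vec{i} \in \{0,\ldots,p-1\}^k$ with
\[
R_{\vec{i}} = \prod_{j=1}^{k} [i_j/p,\ (i_j+1)/p] \times [0,1]^{d-k}
\]
(using half-open intervals to enforce pairwise disjointness). Applying the preceding lemma to each $R_{\vec{i}}$ yields a width-$3k$, depth-$2$ ReLU network $f_{\vec{i}}$ whose support lies in $R_{\vec{i}}$. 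Set $\mathcal{F} = \{f_{\vec{i}}\}_{\vec{i}}$; by disjointness of supports, any single query $x$ can produce a nonzero logit value for at most one element of $\mathcal{F}$.

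For the lower bound, let $\mathcal{A}$ be any (possibly adaptive) extraction algorithm targeting an unknown $f_{\vec{i}^\star} \in \mathcal{F}$ and suppose it makes $q \le p^k - 2$ queries $x_1,\ldots,x_q$. I would run $\mathcal{A}$ against an adversarial oracle that answers $0$ to every query and only afterwards commits to the target. By disjointness, the queries intersect at most $q$ cells, so at least two cells $R_{\vec{i}^\star}$ and $R_{\vec{i}^\circ}$ remain untouched. The query-response transcripts of $\mathcal{A}$ under $f_{\vec{i}^\star}$ and under $f_{\vec{i}^\circ}$ are identical all-zero strings, so whatever hypothesis $\mathcal{A}$ outputs must be wrong for at least one of these two possible targets; this proves $\Omega(p^k)$ queries are necessary. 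A matching $O(p^k)$ upper bound is straightforward: query one interior precision-$p$ point per cell to locate $\vec{i}^\star$, then use $O(1)$ additional queries to read off its parameters using the structure from the previous lemma, giving the claimed $\Theta(p^k)$.

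The main obstacle is the precision-$p$ bookkeeping. The gadget of the preceding lemma is strictly positive only on a subregion of $R_{\vec{i}}$ concentrated near the midpoint (because each $T_i$ only reaches $1$ at a single interior point), so I must confirm that every cell contains at least one precision-$p$ lattice point on which the function is strictly positive; otherwise distinct members of $\mathcal{F}$ are not actually distinguishable, even with unlimited queries. I would handle this by slightly contracting each cell away from its boundary and, if needed, replacing the three-ReLU triangular bump $T_i$ with a four-ReLU trapezoidal bump that has a flat plateau of value $1$ around the midpoint, ensuring a nonempty set of precision-$p$ inputs inside $R_{\vec{i}}$ all evaluate to the same strictly positive value while the function is exactly zero outside $R_{\vec{i}}$. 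Once these geometric details are pinned down, the disjoint-supports adversary argument above immediately yields the $\Theta(p^k)$ lower bound.
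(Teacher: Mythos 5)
Your proposal is correct and follows essentially the same route as the paper: a grid of $p^k$ pairwise-disjoint rectangles in $k$ coordinates, one bump function per cell via the preceding lemma, and the observation that a query answered by zero eliminates at most one candidate, forcing $p^k$ queries in the worst case. Your extra care about whether each cell actually contains a precision-$p$ lattice point on which the gadget is strictly positive is a legitimate refinement the paper glosses over, but it does not change the argument's structure.
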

Here, $p$ is the feature precision; images with 8-bit pixels have $p=256$.
\begin{proof}
We begin by constructing $\mathcal{F}$. The following $p$ ranges are clearly pairwise disjoint: $\lbrace(\frac{i-1}{p}, \frac{i}{p})\rbrace_{i=1}^{p}$. Then pick any $k$ indices, and we can construct $p^k$ distinct $k$-rectangle bounded functions - one for each element in the Cartesian product of each index's set of ranges. Call this set $\mathcal{F}$.

The set of inputs with non-zero output is distinct for each function, because their rectangles are distinct. Now consider the information gained from any query. If the query returns a non-zero value, the function is learned. If not, at most one function from $\mathcal{F}$ is ruled out - the function whose rectangle was queried. Then any sequence of $n$ queries to an oracle can rule out at most $n$ of the functions of $\mathcal{F}$, so that at least $|\mathcal{F}|=p^k$ queries are required in the worst case. 
\end{proof}

Putting Lemma 1 and 2 together gives us Theorem~\ref{thm:rectangle}.

\begin{reptheorem}{thm:subsetsum}
Checking whether two networks with domains $\lbrace 0, 1\rbrace^d$ are functionally equivalent is NP-hard.
\end{reptheorem}
\begin{proof}
We prove this by reduction to subset sum. A similar reduction (reducing to 3-SAT instead of Subset Sum) for a different statement appears in \cite{katz2017reluplex}.

Suppose we receive a subset sum instance $T, p, [v_1, v_2, \cdots, v_d]$ - the set is $v$, the target sum is $T$, and the problem's precision is $p$. We will construct networks $f_1$ and $f_2$ such that checking if $f_1$ and $f_2$ are functionally equivalent is equivalent to solving the subset sum instance. We start by setting $f_1=0$ - it never returns a non-zero value. We now construct a network $f_2$ that has nonzero output only if the subset sum instance can be solved (and finding an input with nonzero output reveals the satisfying subset).

The network $f_2$ has three hidden units in the first layer with incoming weight for the $i$th feature equal to $v_i$. This means the dot product of the input $x$ with weights will be the sum of the subset $\lbrace i| x_i = 1\rbrace$. We want to force this to accept iff there is an input where this sum is $T$. To do so, we use the same 3-ReLU gadget as in the proof of Theorem~\ref{thm:rectangle}:
\begin{multline*}
f_2(x; T, p, v) = \relu(x\cdot v - (T-p/2))\\ + \relu(x\cdot v - (T+p/2)) - 2\relu(x\cdot v - T).
\end{multline*}
As before, this will only be nonzero in the range $[T-p/2, T+p/2]$, and we are done.

\end{proof}

\section{Prototypicality and Fidelity}
\label{app:prototypical}
\begin{figure}
    \centering
    \includegraphics[width=\linewidth]{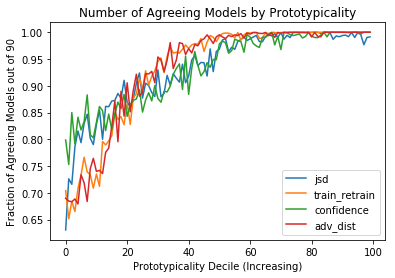}
    \caption{Fidelity is easier on more prototypical examples.}
    \label{fig:prototypes}
\end{figure}

We know from Section~\ref{sec:fid_limits} that learning strategies struggle to achieve perfect fidelity due to non-determinism inherent in learning. What remains to be understood is whether some samples are more difficult than others to achieve fidelity on. We investigate using recent work on identifying prototypical data points. Using each metric developed in Carlini \emph{et al.}~\cite{carlini2019prototypical}, we can rank the Fashion-MNIST test set in order of increasing prototypicality. Binning the prototypicality ranking into percentiles, we can measure how many of the 90 models we trained for Section~\ref{sec:fid_limits} agree with the oracle's prediction. The intuition here is that more prototypical examples should be more consistently learnable, whereas more outlying points may be harder to consistently classify. Indeed, we find that this is the case - all metrics find a correlation between prototypicality and model agreement (fidelity), as seen in Figure~\ref{fig:prototypes}. Interestingly, the metrics which do not use ensembles of models (adversarial distance and holdout-retraining) have the best correlation with the model agreement metric---roughly the top 50\% of prototypical examples by these metrics are classified the same by nearly all 90 models.

\section{Supplement for Section~\ref{sec:dna}}

\label{app:dna_supp}

Accuracies for the oracles in Section~\ref{sec:dna} are found in Table~\ref{tab:fe_oracle_acc}.

\begin{table}[h!]
\centering
\begin{tabular}{rr|rr}
    \multicolumn{2}{c}{MNIST} & \multicolumn{2}{c}{CIFAR-10} \\
    \toprule
    Parameters &  Accuracy & Parameters & Accuracy \\
    \midrule
    12,500 & 94.3\% & 49,000 & 29.2\% \\
    25,000 & 95.6\% & 98,000 & 34.2\%\\
    50,000 & 97.2\% & 196,000 & 40.3\%\\
    100,000 & 97.7\% & 393,000 & 42.6\% \\
    200,000 & 98.0\% & 786,000 & 43.1\% \\
    400,000 & 98.3\% & 1,572,000 & 45.9\% \\
    \bottomrule
\end{tabular}
\caption{Statistics for the oracle models we train to extract.}
\label{tab:fe_oracle_acc}
\end{table}

Figure~\ref{fig:fi_logits} shows a distribution over the bits of
precision in the difference between the logits (i.e., pre-softmax prediction)
of the 16 neuron oracle neural network and the extracted network. Formally, we measure the magnitude of the gap $|f_\theta(x) - f_{\hat{\theta}}(x)|$. Notice that this is a different (and typically stronger) measure of fidelity than used elsewhere in the paper. 

\begin{figure}
    \centering
    \includegraphics[scale=.8]{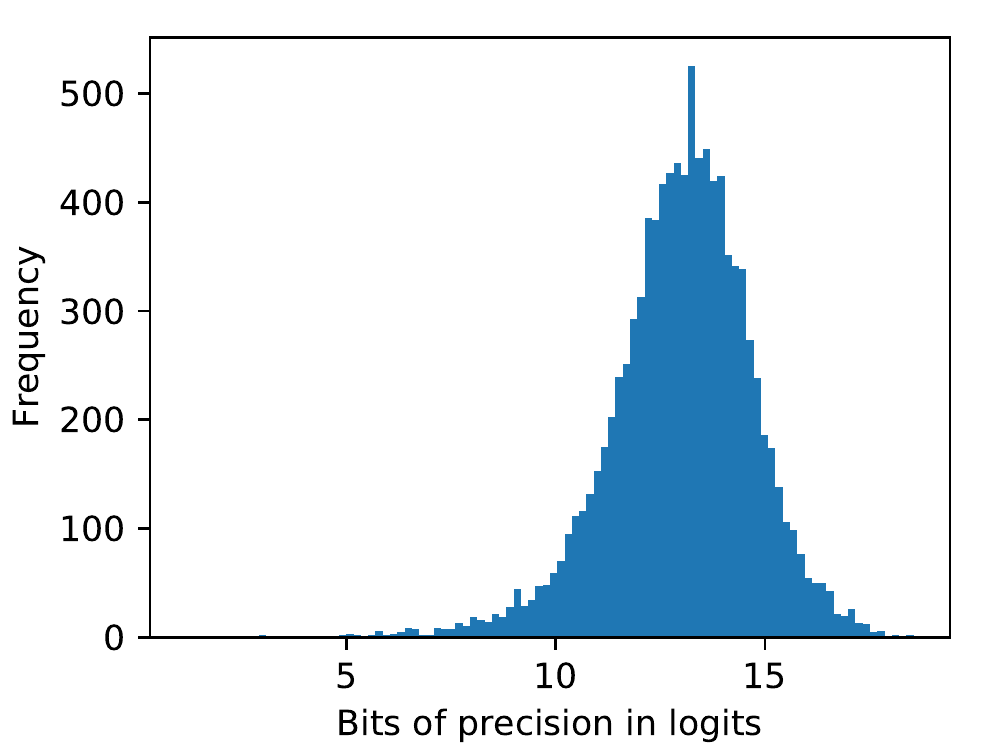}
    \caption{For a 16-neuron MNIST model the attack works. Plotted here is number of bits of precision on the logits normalized by the value of the lot as done in the prior figure.}
    \label{fig:fi_logits}
\end{figure}

\section{Query Complexity of Functionally Equivalent Extraction}
\label{app:dna_queries}
In this section, we briefly analyze the query complexity of the attack from Section~\ref{sec:dna}. We assume that a simulated partial derivative requires $O(1)$ queries using finite differences.
\begin{enumerate}
    \item Critical Point Search. This step is the most nontrivial to analyze, but fortunately this was addressed in~\cite{milli2018model}. They found this step requires $O(h\log(h))$ gradient queries, which we simulate with $O(h\log(h))$ model queries.
    \item Weight Recovery. This piece is significantly complicated by not having access to gradient queries. For each $\relu$, absolute value recovery requires $O(d)$ queries and weight sign recovery requires an additional $O(d)$, making this step take $O(dh)$ queries total.
    \item Global Sign Recovery. For each $\relu$, we require only three queries. Then this step is $O(h)$.
    \item Last Layer Extraction. This step requires $h$ queries to make the system of linear equations full rank (although in practice we reuse previous queries here, making this step require 0 queries).
\end{enumerate}

Overall, the algorithm requires $O(h\log(h)+dh+h)=O(dh)$ queries. Extraction requires $\Omega(dh)$ queries without auxillary information, as there are $dh$ parameters in the model. Then the algorithm is query-optimal up to a constant factor, removing logarithmic factors from Milli \emph{et al.}~\cite{milli2018model}.

\clearpage

\end{document}